\documentclass[sigconf,screen,nonacm]{acmart}
\AtBeginDocument{%
  }

\setcopyright{none}
\renewcommand\footnotetextcopyrightpermission[1]{}
\usepackage{cleveref}
\usepackage{hyperref}
\usepackage{booktabs}
\usepackage{threeparttable}
\usepackage[table]{xcolor}
\usepackage{array}
\usepackage{graphicx}
\usepackage{caption}
\usepackage{pifont}
\usepackage{listings}
\usepackage{xcolor}
\usepackage{multirow}

\definecolor{faintgreen}{RGB}{230,244,234}
\newcommand{\cmark}{\ding{51}}
\newcommand{\xmark}{\ding{55}}

\usepackage{booktabs}
\usepackage[table]{xcolor}

\definecolor{faintgreen}{RGB}{230,244,234}
\newcommand{\faintmidrule}{%
  \arrayrulecolor{black!20}\midrule
  \arrayrulecolor{black}%
}
\usepackage{amsthm}
\newtheoremstyle{nodefindent}% name
  {2pt}% Space above
  {2pt}% Space below
  {\itshape}% Body font
  {0pt}% Indent amount
  {\bfseries\itshape}% Head font
  {.}% Punctuation after head
  {0.5em}% Space after head
  {}% Head spec

\theoremstyle{nodefindent}
\newtheorem{definition}{Definition}[section]
\newtheorem{proposition}{Proposition}[section]
\begin{document}

%%
%% The "title" command has an optional parameter,
%% allowing the author to define a "short title" to be used in page headers.

\title{Score-Control for Hallucination Reduction in Diffusion Models}

%%
%% Author information for the arXiv version.
\author{Mahesh Bhosale\textsuperscript{*}, Naresh Kumar Devulapally\textsuperscript{*}, Abdul Wasi, Chau Pham, Vishnu Suresh Lokhande, David Doermann}
\affiliation{%
  \institution{University at Buffalo}
  \city{Buffalo}
  \state{NY}
  \country{USA}}

%%
%% By default, the full list of authors will be used in the page
%% headers. Often, this list is too long, and will overlap
%% other information printed in the page headers. This command allows
%% the author to define a more concise list
%% of authors' names for this purpose.
\renewcommand{\shortauthors}{Bhosale et al.}
\newcommand{\mahesh}[2]{%
    {\color{blue} #1}% The text being discussed
    {\color{cyan}\textbf{ [Mahesh: #2]}}% Your comment tag
}
%%
%% The abstract is a short summary of the work to be presented in the
%% article.
\begin{abstract}
Diffusion models have emerged as the backbone of modern generative AI, powering advances in vision, language, audio and other modalities. Despite their success, they suffer from \emph{hallucinations}, implausible samples that lie outside the support of true data distribution, which degrade reliability and trust. In this work, we first empirically confirm previously proposed hypothesis that score smoothness causes hallucinations in \emph{Image Generation diffusion models} and provide a density-based perspective. We further formalize this notion by linking the hallucinations probability mass to lipschitz constant of the learned score function. Motivated by this, we introduce a \emph{Variance-Guided Score Modulation} (VSM) strategy that controls the score Jacobian, in turn reducing score smoothness and better approximating the ground truth score that decreases hallucinations. Empirical results on synthetic and real-world datasets demonstrate that our approach reduces hallucinations (up to $\sim$25\%) while maintaining high fidelity and diversity, providing a principled step toward more reliable diffusion-based image generation. We also propose two benchmark datasets with extreme semantic variation for systematic hallucination evaluation. Code and Datasets are publicly available at \url{https://github.com/bhosalems/VSM}.
\end{abstract}

\maketitle
\begingroup
\renewcommand{\thefootnote}{*}
\footnotetext{Equal contribution.}
\endgroup

\section{Introduction}

Diffusion models~\cite{ddim,ldm,ho2020denoising,nichol2021improved} have become the de facto backbone of multi-modality generation. They have been widely used in image synthesis~\cite{ldm, saharia2022palette}, audio generation~\cite{kushwaha2025diff}, text synthesis~\cite{wu2023ar,li2022diffusion}, and biomedical applications \cite{guo2024diffusion,bhosale2025pathdiff}. Recent text-to-image systems including Stable Diffusion 3.5~\cite{stability_sd35_2024} have pushed fidelity, controllability, and latency, enabling interactive editing. Adoption is accelerating at scale: within the span of two years, Adobe Firefly reports 22B+ assets generated as of April 2025 \cite{adobe_firefly_22b_2025}, and enterprise AI usage broadly rose to $78\%$ of organizations in 2024 \cite{stanford_ai_index_2025}.

While diffusion-based text-to-image systems are widely adopted, they raise well-documented concerns around fairness/bias, content safety, privacy, and copyright issues~\cite{trust_gen, hao2023safety, devulapally2025textencoderobjectlevelwatermarking, shen2023finetuning}. In this work we focus on \emph{hallucinations}: implausible samples generated by diffusion models (e.g., images of human hands with extra or missing fingers)~\cite{mode_interpolation, adapt_attention}. 

Beyond reducing sample quality, hallucinations undermine trust in the reliability of model generations. However, hallucinations in diffusion models are still largely underexplored. \cite{kim2024tacklingstructuralhallucinationimage} mitigate structural hallucination in image translation with multiple local diffusion. However, they do not use common text-conditioned image generation setup. \cite{mode_interpolation} study hallucination as mode interpolation but the work does not propose any hallucination mitigation strategies. \cite{adapt_attention} proposes to use temperature scaled self attention, but do not propose mitigation in text-conditioned image generation setting. In this work, we formalize a density-based view of hallucinations and introduce a simple, training-time method to reduce hallucinations during image generation.

Our \textbf{key contributions} are: 
(i) We establish a theoretical connection between score-field smoothness and hallucinations by deriving a lower bound on the learned model density at off-manifold points, showing that off-support probability mass remains non-zero and is governed by the score magnitude bound and its Lipschitz constant. This formalizes why overly smooth learned scores lead to hallucinated samples. (ii) Motivated by this result, we introduce \emph{Variance-Guided Score Modulation (VSM)}, an architecture-agnostic training objective that suppresses hallucinations by counteracting excessively smooth scores. VSM encourages higher local score curvature through a Jacobian-based smoothness penalty, and we derive a tractable diagonal approximation using the variance-learning parameterization of I-DDPM. We further apply this regularization with a time-dependent schedule that emphasizes late denoising steps where hallucinations are most likely to emerge.
(iii) We propose two datasets (\emph{ChessImages}, \emph{Cards}) with very large number $(\sim10^{44})$ of semantic classes to probe hallucination in controlled settings. Across multiple existing datasets, our method reduces hallucinations by up to $\sim\!26\%$, and on the proposed datasets by up to $\sim\!25\%$ compared to baselines.
\vspace{-1em}
\section{Related Work}

\textbf{Diffusion and Score-based models.}
Recently, diffusion models \cite{ho2020denoising,nichol2021improved,ddim,ldm} have gained prominence as a powerful approach for image generation, positioning themselves at the forefront of generative modeling techniques. Among these, denoising diffusion probability models (DDPMs)\cite{ho2020denoising} introduce a simple yet effective framework based on iterative noise removal, while variants such as DDIM\cite{ddim} improve sampling efficiency, enabling faster generation. Closely related are score-based generative models \cite{song2020score,song2019generative} that learn the gradient of the data distribution (score function) across noise levels and generate samples by solving stochastic or deterministic differential equations, offering improved flexibility and faster sampling. Furthermore, latent diffusion models (LDMs) \cite{ldm} improve efficiency by performing the diffusion process in a lower-dimensional latent space, significantly reducing computational costs while maintaining visual fidelity. However, many safety concerns are raised despite wide adoption of diffusion models. In this work we focus on mitigating hallucinations.

% Variance learning methods

\textbf{Hallucinations.} 
\cite{kim2024tacklingstructuralhallucinationimage} mitigates diffusion hallucinations via a local denoising pipeline over estimated OOD regions, but requires expert mask annotations for medical data. In contrast, VSM requires no additional annotations. \cite{mode_interpolation} introduce hallucinations as explained by mode interpolation: interpolating between disjoint modes due to smooth learned score approximations. But this work does not propose any mitigation technique. Oorloff et al.~\cite{adapt_attention} mitigate hallucinations by temperature scaling the self-attention softmax to suppress early-stage noise. \cite{lu2025towards} frame text hallucination as a local generation bias, introduce the Local Dependency Ratio (LDR) to measure it, and argue that stronger global dependencies help. However, their analysis is only focused on images containing text. \cite{wewer2025spatial} reduce hallucinations in structured reasoning via sequential generation with Spatial Reasoning Models (SRMs), but the approach is specialized to spatial reasoning and less applicable to general text-to-image generation. They also introduce MNIST Sudoku, whereas our ChessImages benchmark has a much larger semantic space ($\sim10^{44}$ vs.\ $\sim10^{22}$). DG~\cite{triaridis2025mitigatingdiffusionmodelhallucinations} mitigates diffusion hallucinations by dynamically selecting the classifier-guidance target at each denoising step to selectively sharpen hallucination-prone score directions during sampling. However, we observe this leads to mode collapse.

% \textcolor{red}{How our work is different.}
\section{Hallucinations in Diffusion Models} \label{sec:hallucination-defs}
We formalize hallucinations in the context of diffusion models \cite{mode_interpolation, pham2025memorization}. We categorize generated samples $\tilde{x} \sim \mathcal{P}_{\theta}$ into: (i) \textbf{Hallucinated} and (ii) \textbf{Non-Hallucinated} . We further sub-categorize non-hallucinated samples into (i) \textbf{Memorized} and (ii) \textbf{Generalized} samples. Let $\mathcal{P}_{\mathrm{data}}$ denote the unknown data distribution on $\mathcal{X}\subseteq\mathbb{R}^d$, and let $\mathcal{P}_{\theta}$ denote the model distribution induced by the diffusion model parameterized by $\theta$. We assume $\mathcal{P}_{\theta}$ admits a density $p_{\theta}$ with respect to Lebesgue measure on $\mathbb{R}^d$. When $\mathcal{P}_{\mathrm{data}}$ is absolutely continuous we denote its Lebesgue density by $p_{\mathrm{data}}$, and otherwise interpret $p_{\mathrm{data}}$ as an effective data density used to define low-density regions.
\noindent
\begin{definition}[\textbf{\textit{Hallucinated Samples}}]
\label{def:hallucination}
Formally, define the $\epsilon$-hallucination set as
\begin{equation}
\mathcal{H}_\epsilon
:= \big\{\, x \in \mathcal{X} : p_{\mathrm{data}}(x) \leq \epsilon \,\big\}
\label{eq:hallucination_definition}
\end{equation}
\end{definition}
A generated sample $\tilde{x}$ is \emph{hallucinated} if $\tilde{x} \in \mathcal{H}_\epsilon$.
Setting $\epsilon=0$ recovers samples that lie in regions where $p_{\mathrm{data}}(x)=0$.
For distributions with global support (e.g., Gaussian mixtures),
we instead choose a vanishingly small $\epsilon>0$ to define an \emph{effective support} and treat
samples in regions of negligible data density as hallucinations.
A sample is \emph{non-hallucinated} if $\tilde{x} \notin \mathcal{H}_\epsilon$.

\begin{definition}[\textbf{\textit{Memorization and Generalization Regions}}]
\label{def:memorized-generalized}
Let $d(\cdot,\cdot)$ denote a distance function on $\mathcal{X}$, and let $\delta > 0$ be a proximity threshold.
Given a training set $\mathcal{X}_{\mathrm{train}} = \{x^{(i)}\}_{i=1}^N$, define:

\noindent (i) Memorization region ($\mathcal{M}$):
\begin{equation}
    \mathcal{M} := \big\{\, x \in \mathcal{X}\setminus\mathcal{H}_\epsilon : \min_{i} d(x, x^{(i)}) \le \delta \,\big\}
    \label{eq:memorized_samples_definition}
\end{equation}

\noindent (ii) Generalization region ($\mathcal{G}$):

\vspace{-1em}

\begin{equation}
    \mathcal{G} := \mathcal{X} \setminus (\mathcal{H}_\epsilon \cup \mathcal{M})
    \label{eq:generalized_samples_definition}
\end{equation}

\end{definition}
A generated sample $\tilde{x} \sim \mathcal{P}_{\theta}$ is \emph{memorized} if $\tilde{x} \in \mathcal{M}$, and it is \emph{generalized} if $\tilde{x} \in \mathcal{G}$.
Throughout the paper, we treat $\epsilon$ and $\delta$ as fixed hyperparameters and omit the dependence of $\mathcal{M}$ and $\mathcal{G}$ on these hyperparameters in the notation for brevity.
By construction, $\mathcal{H}_\epsilon$, $\mathcal{M}$, and $\mathcal{G}$ are mutually exclusive and partition $\mathcal{X}$.
\noindent
\begin{definition}[\textbf{\textit{Hallucination Probability}}]
\label{def:hallucination-prob}
Having defined the region of the sample space that corresponds to hallucinations, we now quantify the likelihood of a model generating such samples.
The \emph{hallucination probability} $\mathbb{P}_{\theta}^{\mathrm{hall}}$ is defined as:

\vspace{-1em}

\begin{equation}
    \mathbb{P}_{\theta}^{\mathrm{hall}}(\epsilon)
:= \Pr_{\tilde{x} \sim \mathcal{P}_{\theta}}\!\left[\tilde{x} \in \mathcal{H}_\epsilon \right]
= \int_{\mathcal{H}_\epsilon} p_{\theta}(x)\, dx
\label{eq:prob_hallucination_definition}
\end{equation}

\end{definition}
In this work, we propose a method to reduce the incidence of hallucinated samples, i.e., samples falling in $\mathcal{H}_\epsilon$.
To assess potential side effects of hallucination mitigation, we further decompose the non-hallucinated region into the memorization and generalization regions, $\mathcal{M}$ and $\mathcal{G}$ (Definition~\ref{def:memorized-generalized}).
Since $\mathcal{H}_\epsilon$, $\mathcal{M}$, and $\mathcal{G}$ are mutually exclusive and partition $\mathcal{X}$, any shift in model behavior that decreases the probability of sampling from $\mathcal{H}_\epsilon$ must be reflected by a corresponding shift toward $\mathcal{M}$ and/or $\mathcal{G}$.
Therefore, our experiments report metrics that quantify both memorization and generalization.

\vspace{-0.5em}

\section{Methods}

In this section, we begin by establishing diffusion model preliminaries in~\cref{sec:preliminary}. In~\cref{sec:hallucinations-inevitable}, we confirm hallucinations are linked to score smoothness, providing theoretical motivation to control the score smoothness that we corroborate experimentally (\cref{fig:score_smoothness_motivation}). Finally, \cref{sec:vsm} introduces Variance-Guided Score Modulation (VSM), our proposed approach for mitigating the hallucinations. 
%Henceforth we use data measure $\mathcal{P}$ and density $p$ interchangeably in favor of simplicity.

\begin{figure*}[htbp]
  \centering
  \includegraphics[width=\linewidth]{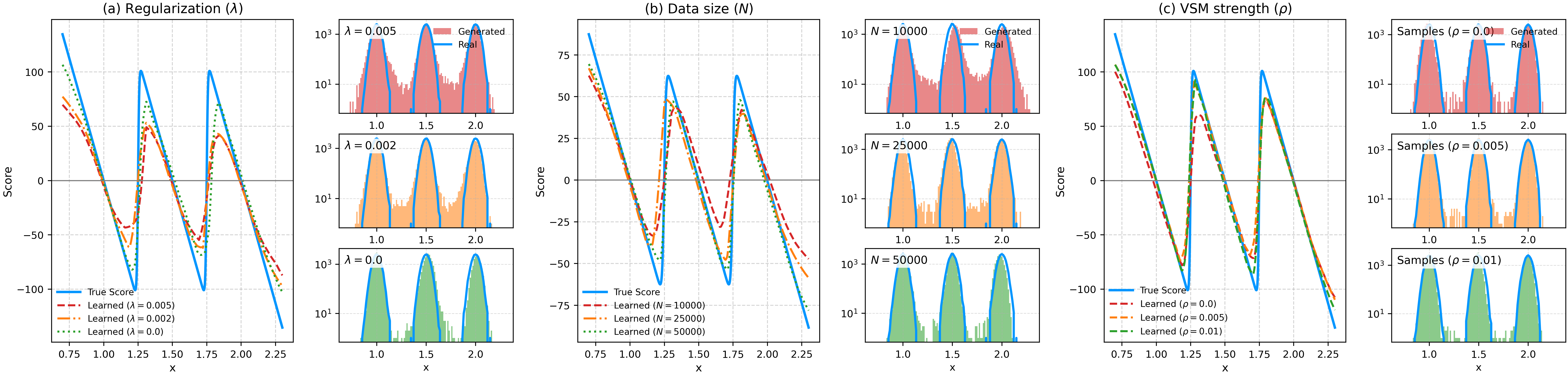}
  \caption{Motivation: Score smoothing causes hallucinations on mixture of 1D Gaussians with means $\mu=[1.0, 1.5, 2.0]$ and $\sigma=0.035$. We simulate score smoothness by adding weight normalization and changing training dataset size. a) increasing $\ell_2$ weight regularization ($\lambda$) on diffusion NN smoothens the learned score more increasingly leaking the probability mass in off-support regions causing more hallucinations. Seen as samples generated outside the support of the true data (represented by blue line). b) Decreasing training sample size also smoothens the score increasing hallucinations. c) increasing strength  ($\rho$) of VSM (our method) effectively reduces score smoothness, and reduces hallucinations.}
  \label{fig:score_smoothness_motivation}
\end{figure*}

% \vspace{-2em}

\subsection{Preliminaries.}
\label{sec:preliminary}
Let $\mathcal{X}\subseteq\mathbb{R}^d$ denote the data domain, and let $x_0 \sim \mathcal{P}_{\mathrm{data}}$ be a clean data sample. The \emph{score function}~\cite{song2020score} is given by: $s(x) \;=\; \nabla_x \log p(x).$ In the variance-preserving (VP) forward diffusion process~\cite{ho2020ddpm}, data are corrupted by Gaussian noise: $
q(x_t \mid x_0) = \mathcal{N}\!\big(\sqrt{\bar\alpha_t}\,x_0,(1-\bar\alpha_t)I\big),$ where $t\in\{1,\dots,T\}$ indexes the noise level and $\bar\alpha_t := \prod_{s=1}^t \alpha_s$. The ground-truth marginal score can be written as an expectation over conditional scores:
\begin{equation}
\label{eq:gt_score}
\begin{aligned}
s_{\mathrm{GT}}(x_t,t)
&= \nabla_{x_t} \log q_t(x_t) = \mathbb{E}_{x_0 \sim q(x_0 \mid x_t)}
\left[ \nabla_{x_t} \log q(x_t \mid x_0) \right] \\
&= \mathbb{E}_{x_0 \sim q(x_0 \mid x_t)}
\left[ -\frac{x_t - \sqrt{\bar\alpha_t}x_0}{1-\bar\alpha_t} \right]
\end{aligned}
\end{equation}
Where $q(x_0\mid x_t)$ is the posterior induced by the forward process. For a fixed $x_0$, the conditional score simplifies to
$ \nabla_{x_t}\log q_t(x_t\mid x_0) = -\epsilon/\sqrt{1-\bar\alpha_t}$.
Thus, the conditional score corresponds to the injected noise $\epsilon$ up to the scale factor $-(1-\bar\alpha_t)^{-1/2}$. In practice, the model $s_\theta(x_t,t)$ is trained to approximate the time-marginal score $\nabla_{x_t} \log q_t(x_t)$ \cite{song2020score}. Define the k-th dimension error for sample $i$ at noise level $t$ as:
\[
\Delta s_{k}^{(i)}(t) := s_{\theta,k}(x_t^{(i)},t) - s_{\mathrm{GT},k}(x_t^{(i)},t)
\]
for $k\in\{1,\dots,d\}$. We summarize the overall error via the root-mean-squared deviation:
\begin{equation}
\Delta s_{RMSE}
:= \sqrt{\frac{1}{NTd}\sum_{i=1}^{N}\sum_{t=1}^{T}\sum_{k=1}^{d}\big(\Delta s_{k}^{(i)}(t)\big)^2}
\label{eq:s_delta}
\end{equation}
$\Delta s_{RMSE}$ captures how well $s_\theta$ approximates the ground-truth score field, we use it to empirically validate its relationship with hallucinations on image datasets.  

\begin{table*}[t]
  \centering
  \fontfamily{ptm}\selectfont
  \scriptsize
  \setlength{\tabcolsep}{6pt}
  \renewcommand{\arraystretch}{1.05}
  \captionsetup{skip=0.3em}
  \resizebox{\textwidth}{!}{%
  \begin{tabular}{@{}l l c c r r@{}}
    \toprule
    \textbf{Dataset} &
    \textbf{Detection Type} &
    \textbf{Time (100 images)} &
    \textbf{RGB} &
    \textbf{Size} &
    \textbf{Semantic Classes} \\
    \midrule
    1D, 2D & Six-Sigma thresholding & $\sim$2 s & \xmark & $10^5$ & low ($\leq$ 25 modes) \\
    Hands \cite{hands11k} & Human annotation & $\sim$12 min & \cmark & 11{,}079 & low ($\leq$ 10) \\
    Shapes \cite{mode_interpolation} & Training-free rules & $\sim$2.5 s & \xmark & 22{,}000 & low (3) \\
    MNIST \cite{mnist} & Classifier thresholding & $\sim$4 s & \xmark & 60{,}000 & low (10) \\
    ImageNet-1K \cite{russakovsky2015imagenetlargescalevisual} & Improved Precision, Recall & $\sim$2 min & \cmark & >500k & High ($1000$) \\
    \textbf{Cards (proposed)} & Training-free rules & $\sim$2.5 s & \cmark & 94{,}000 & Very High ($10^5$) \\
    \textbf{ChessImages (proposed)} & Training-free rules & $\sim$2.5 s & \cmark & 190{,}000 & Extreme ($\geq$ $10^{44}$) \\
    \bottomrule
  \end{tabular}%
  }
  \caption{Datasets used. A \emph{semantic class} denotes a valid, interpretable configuration. The proposed \textit{Cards} and \textit{ChessImages} feature vast semantic spaces and allow rapid training-free hallucination detection, making them effective benchmarks for systematic hallucination studies.}
  \label{tab:dataset_charact}
\end{table*}

\subsection{Motivation}
\label{sec:hallucinations-inevitable}
Diffusion models learn an approximate score function that is a smoothed version of the sharp ground-truth score field (\cref{fig:score_smoothness_motivation}), which \cite{mode_interpolation} identifies as a cause of hallucinations. To confirm this hypothesis empirically, we control the degree of smoothing through weight regularization and training dataset size, and observe its effect on the number of hallucinations in a 1D Gaussian dataset. Specifically, we consider a 1D Gaussian mixture with component means $\{1.0,\,1.5,\,2.0\}$ and shared standard deviation $0.35$. For regularization, we add $\ell_2$ weight regularization to the neural network trained to predict the added noise~\cite{interpolation}. This can be viewed as limiting the network's capacity to represent complex score functions. As shown in the left part of Fig.~\ref{fig:score_smoothness_motivation}a, increasing the regularization strength $\lambda$ increases the smoothness of the learned score. In the right part of Fig.~\ref{fig:score_smoothness_motivation}, we sample points from the model and observe that this increased smoothness leads to more hallucinations, measured as generated samples that fall between modes (outside the $6\sigma$ effective support of the Gaussian mixture, indicated by the blue boundary). This suggests that the model-implied density decays more slowly than the ground-truth density, yielding non-negligible probability mass in low-density regions, even when $\lambda=0$. Similarly, in Fig.~\ref{fig:score_smoothness_motivation}b, we observe that decreasing the dataset size increases score smoothness, leading to more hallucinations. We also observe a positive correlation ($R^2 = 0.44$) between hallucinations and the score error $\Delta s_{\mathrm{RMSE}}$ on the Hands dataset (see Appendix), confirming that this effect extends beyond the simple 1D Gaussian setting. We further formalize the relationship between hallucinations and score smoothness in \cref{prop:gap-lower-bound1}. In Fig.~\ref{fig:score_smoothness_motivation}c, we show that our proposed method, VSM, effectively reduces score smoothness matching the sharp ground-truth score better and thereby decreases the incidence of hallucinations.

\begin{proposition}[\textbf{\textit{Relationship Between Score Smoothness and Hallucinations}}]\label{prop:gap-lower-bound1}
For an off-manifold point $x$ at distance $\delta_x$ from a high-density region of data, the model density admits the lower bound:
\vspace{-1em}
$$p_\theta(x) \ge C_b \exp\left( -S \delta_x - \frac{L}{2} \delta_x^2 \right) > 0$$

\vspace{-1em}

\noindent Where, $C_b>0$ denotes a minimum model density value on the boundary of the high-density region of data, while $L$ and $S$ denote the Lipschitz constant of the learned score field and an upper bound on its magnitude, respectively. The Lipschitz constant L is defined as, $L = \sup_{\Delta x \neq 0} \frac{\|s_\theta(x + \Delta x) - s_\theta(x)\|}{\|\Delta x\|}.$ We assume standard regularity conditions on $s_\theta$ (see the Appendix for proof and more details).
\end{proposition}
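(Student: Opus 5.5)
The plan is to treat $s_\theta$ as the gradient of $\log p_\theta$, i.e.\ $s_\theta(x)=\nabla_x \log p_\theta(x)$. This is the standard regularity assumption under which the learned score defines the model density. We then integrate along a straight path from the boundary of the high-density region to the off-manifold point $x$.

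First, let $\mathcal{D}$ be the high-density region, with $p_\theta \ge C_b$ on $\partial\mathcal{D}$. Pick $x_b\in\partial\mathcal{D}$ realizing the distance, so that $\|x-x_b\|=\delta_x$. Set $u=(x-x_b)/\delta_x$ and $\gamma(\tau)=x_b+\tau u$ for $\tau\in[0,\delta_x]$. Assuming $\log p_\theta$ is $C^1$ along the segment (part of the regularity conditions), the fundamental theorem of calculus gives
\begin{equation*}
\log p_\theta(x)-\log p_\theta(x_b)=\int_0^{\delta_x} s_\theta(\gamma(\tau))^\top u \, d\tau .
\end{equation*}

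The key step is a lower bound on the integrand. Decompose it as
\begin{equation*}
s_\theta(\gamma(\tau))^\top u = s_\theta(x_b)^\top u + \big(s_\theta(\gamma(\tau))-s_\theta(x_b)\big)^\top u .
\end{equation*}
For the first term, Cauchy--Schwarz and the magnitude bound $\|s_\theta(x_b)\|\le S$ give $s_\theta(x_b)^\top u \ge -S$. For the second term, the Lipschitz definition of $L$ gives $\big(s_\theta(\gamma(\tau))-s_\theta(x_b)\big)^\top u \ge -\|s_\theta(\gamma(\tau))-s_\theta(x_b)\|\ge -L\tau$. Integrating over $[0,\delta_x]$ yields
\begin{equation*}
\log p_\theta(x)\ge \log p_\theta(x_b) - S\delta_x - \tfrac{L}{2}\delta_x^2 \ge \log C_b - S\delta_x - \tfrac{L}{2}\delta_x^2 .
\end{equation*}
Exponentiating gives the stated bound. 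Strict positivity follows because $C_b>0$ and the exponential is positive for finite $\delta_x$, $S$, $L$.

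The main obstacle is justifying the regularity assumptions rather than the calculation itself. A learned network $s_\theta$ need not be a conservative field, so we must assume it is the gradient of $\log p_\theta$, or else interpret $p_\theta$ as the density implied by the score; the appendix's regularity conditions should state this. We also need existence of a nearest boundary point (closedness of $\mathcal{D}$) and $C_b>0$ on the boundary. Note that $S$ enters only at $x_b$. One could equally apply the magnitude bound along the whole path and obtain $-S\delta_x$ with no $L$ term, but the stated form is the one that exposes the dependence on the Lipschitz constant, which is the point of the proposition: smoother scores (smaller $L$) give a larger lower bound and hence more off-support mass in $\mathcal{H}_\epsilon$.
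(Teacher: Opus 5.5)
Your proposal is correct and is essentially the paper's proof. The paper takes the nearest point $y\in\mathcal{S}$ and expands $\log p_\theta(x)$ by Taylor's theorem with integral remainder along $[y,x]$, which is exactly your fundamental-theorem-of-calculus step with the same decomposition; it then bounds the linear term by $-S\delta_x$ via Cauchy--Schwarz, bounds the remainder by $-\tfrac{L}{2}\delta_x^2$ via the Lipschitz property, and concludes with $p_\theta(y)\ge C_b$, where it additionally derives $C_b>0$ from positivity, continuity and compactness and checks that the segment stays in the tubular neighborhood (your side remark that a bound $S$ on the whole segment already gives the stronger $C_b e^{-S\delta_x}$ is also valid in the paper's setting).
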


\noindent \textbf{\textit{Takeaway:}} \Cref{prop:gap-lower-bound1} formalizes the intuition that hallucinations arise from score-field smoothness: if a learned score is too smooth (low $L$), probability mass is forced to leak exponentially into off-manifold regions, creating implausible generated samples. In~\cref{sec:vsm}, we propose a way to control $L$ thereby reducing off-support probability mass leakage.

Notably, hallucination probability diminishes as the distance from the manifold $\delta_x$ increases. This is consistent with the experimental findings of \cite{mode_interpolation}, who demonstrate that increasing the separation between 1D Gaussian modes leads to a measurable reduction in number of hallucinations. While ~\cref{prop:gap-lower-bound1} identifies the score’s Lipschitz constant $L$ as the primary driver of off-manifold mass, precisely localizing these boundaries requires knowledge of the data distribution's support in high dimensional score field \textit{apriori} which is not known. Instead, our work focuses on globally modulating the Lipschitz regularity of the learned score to increase its local curvature, thereby suppressing the off-manifold density leakage that drives hallucination mass. We experimentally confirm that global application is helpful to reduce the hallucinations (\cref{tab:finetune_table_all}) across all the dataset. 

% \mahesh{Latent-space analysis via $\beta$-VAE reveals that VSM increases score sharpness within off-manifold gap regions, confirming its efficacy in targeting hallucination-prone zones without explicit boundary supervision (\cref{fig:beta_vae_analysis})}{I think this is risky, because Jacobian would be higher in the gap region which means the penalty is low, where around high density regions with similar logic penalty would be high, so I think it is better not to show them the the scores in the main manuscript. There is some mechanism that despite counterintutitive penalty is causing reduction in hallucinations but I dont know what that mechanism is}.  

\subsection{Variance Guided Score Modulation}
\label{sec:vsm}
As established in~\cref{prop:gap-lower-bound1}, hallucination mass is driven by smooth score (learned score’s small Lipschitz constant $L$ as compared to ground truth score). Since, score Jacobian $J_\theta$ satisfies,
\[
\|J_\theta(x)\|_2 \le L \quad \forall x,
\]
encouraging larger Jacobian magnitudes can implicitly increase the effective Lipschitz constant of the score field. Therefore, we define smoothness penalty as, 
\begin{equation}
\label{eq:smoothness_penalty}
\mathcal{L}_{\mathrm{VSM}}
\;=\;
\mathbb{E}_{t,x_t}\!\Big[\;\phi\!\enspace\big(\,\|J_\theta(x_t,t)\|^2\,\big)\;\Big],
\qquad
\phi(u)=\frac{1}{u+\eta},\ \eta>0,
\end{equation}

\noindent\textbf{Tractability.}
Computing the full Jacobian of the high-dimensional marginal score
$s_\theta(x_t,t)$ is intractable.
We therefore use a \emph{diagonal curvature proxy} derived from variance learning.
Following I-DDPM~\cite{nichol2021improved}, we parameterize the reverse transition as
$p_\theta(x_{t-1}\!\mid x_t)=\mathcal{N}\!\big(\mu_\theta(x_t,t),\Sigma_\theta(x_t,t)\big)$ and optimize the variational objective $\mathcal{L}_{\mathrm{VLB}}$ in~\cref{eq:vlb}
to learn a diagonal approximation of the reverse conditional covariance,
$\Sigma_\theta(x_t,t)\approx \mathrm{diag}\!\big(\sigma_\theta^2(x_t,t)\big)$.
This yields a diagonal precision matrix
$\Sigma_\theta(x_t,t)^{-1}\approx \mathrm{diag}\!\big(1/\sigma_\theta^2(x_t,t)\big)$.
Note that for Gaussian noising kernel
$q(x_t\!\mid x_{t-1})=\mathcal{N}(a_t x_{t-1},\sigma_t^2 I)$, Bayes' rule gives,
\begin{equation}
\begin{aligned}
% \log p(x_{t-1}\mid x_t)
% &= \log p_{t-1}(x_{t-1}) + \log q(x_t\mid x_{t-1}) - \log p_t(x_t), \\
\nabla_{x_{t-1}}^{2}\log p(x_{t-1}\mid x_t)
&= \nabla_{x_{t-1}}^{2}\log p_{t-1}(x_{t-1})
+ \nabla_{x_{t-1}}^{2}\log q(x_t\mid x_{t-1})\\
&= \nabla_{x_{t-1}}^{2}\log p_{t-1}(x_{t-1}) - \frac{a_t^{2}}{\sigma_t^{2}}I.
\end{aligned}
\end{equation}
The key consequence of the above decomposition is that the only \emph{tractable} curvature contribution comes from the Gaussian kernel (second term), while the remaining marginal curvature is captured by the reverse conditional covariance learned via variance prediction (LHS). With a local Gaussian approximation of the marginal~\cite{MengSongLiErmon2021, alger2024point},
$\nabla_{x_{t-1}}^2\log p_{t-1}(x_{t-1})\approx -\Sigma_{t-1}^{-1}$,
and using the learned reverse conditional curvature
$\nabla_{x_{t-1}}^2 \log p_\theta(x_{t-1}\mid x_t)=-\Sigma_\theta(x_t,t)^{-1}$,
\[
\nabla_{x_{t-1}}^2\log p_{t-1}(x_{t-1})
\approx
-\Sigma_\theta(x_t,t)^{-1}
+
\frac{a_t^2}{\sigma_t^2}I.
\]
We retain only the sample-dependent diagonal term and obtain practical diagonal proxy for curvature that we use in $\mathcal{L}_{\mathrm{VSM}}$,
\[
J_{\theta}(x_{t-1}, t-1) = 
\nabla_{x_{t-1}}s_\theta(x_{t-1},t-1) \approx\
\mathrm{diag}(-\,1/\sigma_\theta^2(x_t,t)).
\]

\begin{table*}[t]
  \centering
  \fontfamily{ptm}\selectfont
  \scriptsize
  \setlength{\tabcolsep}{4pt}
  \renewcommand{\arraystretch}{1.05}
  \captionsetup{skip=0.3em}
  \resizebox{\textwidth}{!}{%
    \begin{tabular}{@{}l cc cc cc@{}}
      \toprule
      \textbf{Method} &
      \multicolumn{2}{c}{\textbf{1D Gaussian}} &
      \multicolumn{2}{c}{\textbf{2D Gaussian}} &
      \multicolumn{2}{c}{\textbf{Hands-11K}} \\
      \cmidrule(lr){2-3}\cmidrule(lr){4-5}\cmidrule(lr){6-7}
      & \textbf{Score RMSE}$\downarrow$ & \textbf{H\%}$\downarrow$ ($\times 10^{-3}$)
      & \textbf{Score RMSE}$\downarrow$ & \textbf{H\%}$\downarrow$
      & \textbf{Score RMSE}$\downarrow$ & \textbf{H\%}$\downarrow$ \\
      \midrule
      DDPM\textsuperscript{\textdagger} &
      10.5573 $\pm$ 0.0115 & 5.2173 $\pm$ 1.92 &
      19.60 $\pm$ 0.0242 & 1.1844 $\pm$ 0.0108 &
      21.92 $\pm$ 0.57 & 11.00 $\pm$ 2.37 \\
      \rowcolor{faintgreen}
      + VSM $\mathcal{L}_{\mathrm{VSM}}$ &
      \textbf{7.7645 $\pm$ 0.0141} & \textbf{2.7027 $\pm$ 0.863} &
      \textbf{18.70 $\pm$ 0.0888} & \textbf{1.0831 $\pm$ 0.00823} &
      \textbf{15.49 $\pm$ 0.29} & \textbf{5.01 $\pm$ 1.98} \\
      \bottomrule
    \end{tabular}%
  }
  \caption{Score RMSE and hallucination rate across synthetic Gaussian mixtures (1D/2D) and Hands-11K. \textit{Across all datasets, VSM reduces score error, thereby reducing hallucinations}.}
  \label{tab:score_error_table}
\end{table*}

\begin{table*}[t]
  \centering
  \fontfamily{ptm}\selectfont
  \footnotesize
  \setlength{\tabcolsep}{3pt}
  \renewcommand{\arraystretch}{0.95}
  \captionsetup{skip=0.3em}
  \begin{threeparttable}
  \resizebox{\textwidth}{!}{%
    \begin{tabular}{@{}>{\raggedright\arraybackslash\hspace{0pt}}p{2.55cm}
                    *{4}{>{\centering\arraybackslash}p{1.25cm}}|
                    *{4}{>{\centering\arraybackslash}p{1.25cm}}@{}}
      \toprule
      \textbf{Method}
        & \multicolumn{4}{c|}{\textbf{Hands-11K}}
        & \multicolumn{4}{c}{\textbf{MNIST}} \\
      \cmidrule(lr){2-5}\cmidrule(l){6-9}
        & \textbf{C-FID} \(\downarrow\) & \textbf{FID} \(\downarrow\) & \textbf{FLD} \(\downarrow\) & \textbf{H\%} \(\downarrow\)
        & \textbf{C-FID} \(\downarrow\) & \textbf{FID} \(\downarrow\) & \textbf{FLD} \(\downarrow\) & \textbf{H\%} \(\downarrow\) \\
      \midrule
      DDPM \cite{nichol2021improved}
        & 12.00 & 126.25 & 35.99 & 23.33
        & 16.23 & 112.16 & 28.14 & 4.50 \\
      \rowcolor{faintgreen}
      + VSM $\mathcal{L}_{\mathrm{VSM}}$
        & {10.13} & {108.12} & {22.20} & \textbf{{5.15}}
        & {8.47} & {43.75} & \underline{6.99} & {3.50} \\
      \faintmidrule
      LDM-UC \cite{ldm}
        & 8.89 & 45.78 & 24.87 & 19.66
        & 11.82 & 76.98 & 25.29 & \underline{1.83} \\
      \rowcolor{faintgreen}
      + VSM $\mathcal{L}_{\mathrm{VSM}}$
        & {\underline{7.75}} & \textbf{{43.98}} & {22.21} & {16.54}
        & \textbf{{3.91}} & \underline{31.38} & \textbf{{6.28}} & \textbf{{0.33}} \\
      \faintmidrule
      LDM-Text Cond. \cite{ldm}
        & 10.02 & 83.96 & \underline{21.34} & 29.50
        & {8.89} & 230.13 & 23.59 & 23.00 \\
      \rowcolor{faintgreen}
      + VSM $\mathcal{L}_{\mathrm{VSM}}$
        & \textbf{5.58} & {44.95} & \textbf{{20.07}} & {21.15}
        & 9.36 & {228.21} & {8.74} & {12.48} \\
      \faintmidrule
      LDM-PT \cite{mahajan2024prompting}
        & 10.17 & \underline{44.15} & 24.20 & 24.83
        & \underline{8.44} & 64.27 & 23.58 & 19.83 \\
      AAM\textsuperscript{\textdagger\textdagger} \cite{adapt_attention}
        & -- & 102.30 & -- & \underline{9.20}
        & -- & \textbf{15.10}  & -- & 5.70 \\
      \midrule

      \textbf{Method}
        & \multicolumn{4}{c|}{\textbf{Cards}}
        & \multicolumn{4}{c}{\textbf{Shapes}} \\
      \cmidrule(lr){2-5}\cmidrule(l){6-9}
        & \textbf{C-FID} \(\downarrow\) & \textbf{FID} \(\downarrow\) & \textbf{FLD} \(\downarrow\) & \textbf{H\%} \(\downarrow\)
        & \textbf{C-FID} \(\downarrow\) & \textbf{FID} \(\downarrow\) & \textbf{FLD} \(\downarrow\) & \textbf{H\%} \(\downarrow\) \\
      \midrule
      DDPM \cite{nichol2021improved}
        & 9.10 & 112.33 & 33.29 & 22.41
        & 26.07 & 123.34 & 21.84 & 29.50 \\
      \rowcolor{faintgreen}
      + VSM $\mathcal{L}_{\mathrm{VSM}}$
        & \textbf{{2.20}} & \underline{64.35} & \underline{21.40} & \textbf{{2.33}}
        & {18.98} & {98.61} & {17.29} & \textbf{{3.00}} \\
      \faintmidrule
      LDM-UC \cite{ldm}
        & 7.28 & 87.53 & 42.54 & 17.60
        & \underline{2.04} & \underline{24.42} & \underline{9.74} & 7.17 \\
      \rowcolor{faintgreen}
      + VSM $\mathcal{L}_{\mathrm{VSM}}$
        & \underline{3.78} & \textbf{{32.54}} & \textbf{{19.35}} & \underline{7.60}
        & \textbf{{1.56}} & \textbf{{19.84}} & \textbf{{7.04}} & \underline{4.67} \\
      \midrule

      \textbf{Method}
        & \multicolumn{4}{c|}{\textbf{ChessImages}}
        & \multicolumn{4}{c}{\textbf{ImageNet-1K}} \\
      \cmidrule(lr){2-5}\cmidrule(l){6-9}
        & \textbf{C-FID} \(\downarrow\) & \textbf{FID} \(\downarrow\) & \textbf{FLD} \(\downarrow\) & \textbf{H\%} \(\downarrow\)
        & \textbf{C-Pre.} \(\uparrow\) & \textbf{C-Rec.} \(\uparrow\) & \textbf{FLD} \(\downarrow\) & \textbf{FID} \(\downarrow\) \\
      \midrule
      DDPM \cite{nichol2021improved}
        & 3.74 & 191.68 & 96.83 & 71.00
        & 0.44 & 0.18 & 19.19 & 135.57 \\
      \rowcolor{faintgreen}
      + VSM $\mathcal{L}_{\mathrm{VSM}}$
        & 4.32 & 191.19 & \textbf{48.75} & 56.01
        & {0.63} & \underline{0.43} & {15.95} & {126.32} \\
      \midrule
      LDM-UC \cite{ldm}
        & \underline{3.59} & \underline{29.15} & 89.65 & \underline{11.66}
        & 0.56 & 0.41 & \underline{7.23} & \underline{76.86} \\
      \rowcolor{faintgreen}
      + VSM $\mathcal{L}_{\mathrm{VSM}}$
        & \textbf{3.54} & \textbf{34.67} & \underline{52.17} & \textbf{9.28}
        & \underline{0.68} & {\textbf{0.51}} & \textbf{{4.77}} & \textbf{{69.97}} \\
        DG \textsuperscript{\textdagger\textdagger}\cite{triaridis2025mitigatingdiffusionmodelhallucinations}
        & -- & -- & -- & --
        & \textbf{0.75} & 0.23 & -- & -- \\
      \bottomrule
    \end{tabular}%
    }
  \caption{VSM reduces hallucinations relative to baselines across Hands-11K, MNIST, Cards, Shapes, ChessImages, and ImageNet-1K. Metrics: C-FID = CLIP-FID, FID = Inception-FID, FLD \cite{fld}, H\% = hallucination rate, CLIP-Prec./Rec. = improved precision/recall in CLIP feature space. ``--'' indicates metrics not reported. Bold is used to represent best and underline for the second best result. $\textsuperscript{\textdagger\textdagger}$ represents numbers from ArXiV, public code unavailable.}
  \label{tab:finetune_table_all}
  \end{threeparttable}
\end{table*}

\noindent\textbf{Training objective.}
We augment the standard denoising noise-matching objective $\mathcal{L}_\mathrm{DM}$ with the variational term for variance learning $\mathcal{L}_\mathrm{VLB}$ and smoothness penalty $\mathcal{L}_\mathrm{VSM}$ (\cref{eq:smoothness_penalty}):
\begin{equation}
\label{eq:loss_total}
\mathcal{L}_{\mathrm{Total}}
= \mathcal{L}_{\mathrm{DM}} + \mathcal{L}_{\mathrm{VLB}} + \eta(t)\,\mathcal{L}_{\mathrm{VSM}},
\end{equation}
where,
\begin{equation}
\mathcal{L}_{\mathrm{DM}}
= \mathbb{E}_{x_0,\epsilon,t}\big[\|\epsilon-\epsilon_\theta(x_t,t)\|^2\big]
\label{eq:loss_total}
\end{equation}
is the standard noise-prediction loss that equivalently learns the marginal score field. We adopt the I-DDPM variational objective~\cite{nichol2021improved} as $\mathcal{L}_{\mathrm{VLB}}$,

\vspace{-1em}

\begin{align}
\label{eq:vlb}
\mathcal{L}_{\mathrm{VLB}}
&:= \mathcal{L}_0 + \sum_{t=2}^{T}\mathcal{L}_{t-1} + \mathcal{L}_T, \\
\mathcal{L}_0
&:= - \log p_\theta(x_0 \mid x_1), \\
\mathcal{L}_{t-1}
&:= D_{\mathrm{KL}}\!\big(q(x_{t-1}\mid x_t, x_0)\ \Vert\ p_\theta(x_{t-1}\mid x_t)\big), \qquad t=2,\dots,T, \\
\mathcal{L}_T
&:= D_{\mathrm{KL}}\!\big(q(x_T\mid x_0)\ \Vert\ p(x_T)\big).
\end{align}
Where $q(x_{t-1}\mid x_t,x_0)$ is the closed-form forward posterior. The KL terms $\mathcal{L}_{t-1}$ provide direct supervision for learning $\Sigma_\theta$ by matching $p_\theta$ to $q$ at each timestep. The I-DDPM framework further facilitates the implementation of $\mathcal{L}_{\mathrm{VSM}}$ through efficient fine-tuning. By adding a variance-learning head to a pre-trained checkpoint, we avoid the prohibitive cost of training from scratch. Our experiments compare the efficacy of this fine-tuning approach for smoothness correction against usual finetuning.

\noindent \textbf{Time dependent scaling.} Since hallucinations tend to emerge during the late stages of sampling~\cite{mode_interpolation, adapt_attention}, we use a time-varying weighting
\begin{equation}
\label{eq:time-dependent-equation}
\eta(t) = \frac{\rho}{\sqrt{1 - \bar{\alpha}_t}}
\end{equation}
where $\rho$ is a tunable hyperparameter. This schedule progressively increases the VSM penalty as sampling approaches the low-noise regime, thereby punishing smoothing near the final denoising steps while avoiding the overly aggressive weighting of a fully inverse schedule.
\begin{figure*}[t]
  \centering
  \includegraphics[width=\textwidth]{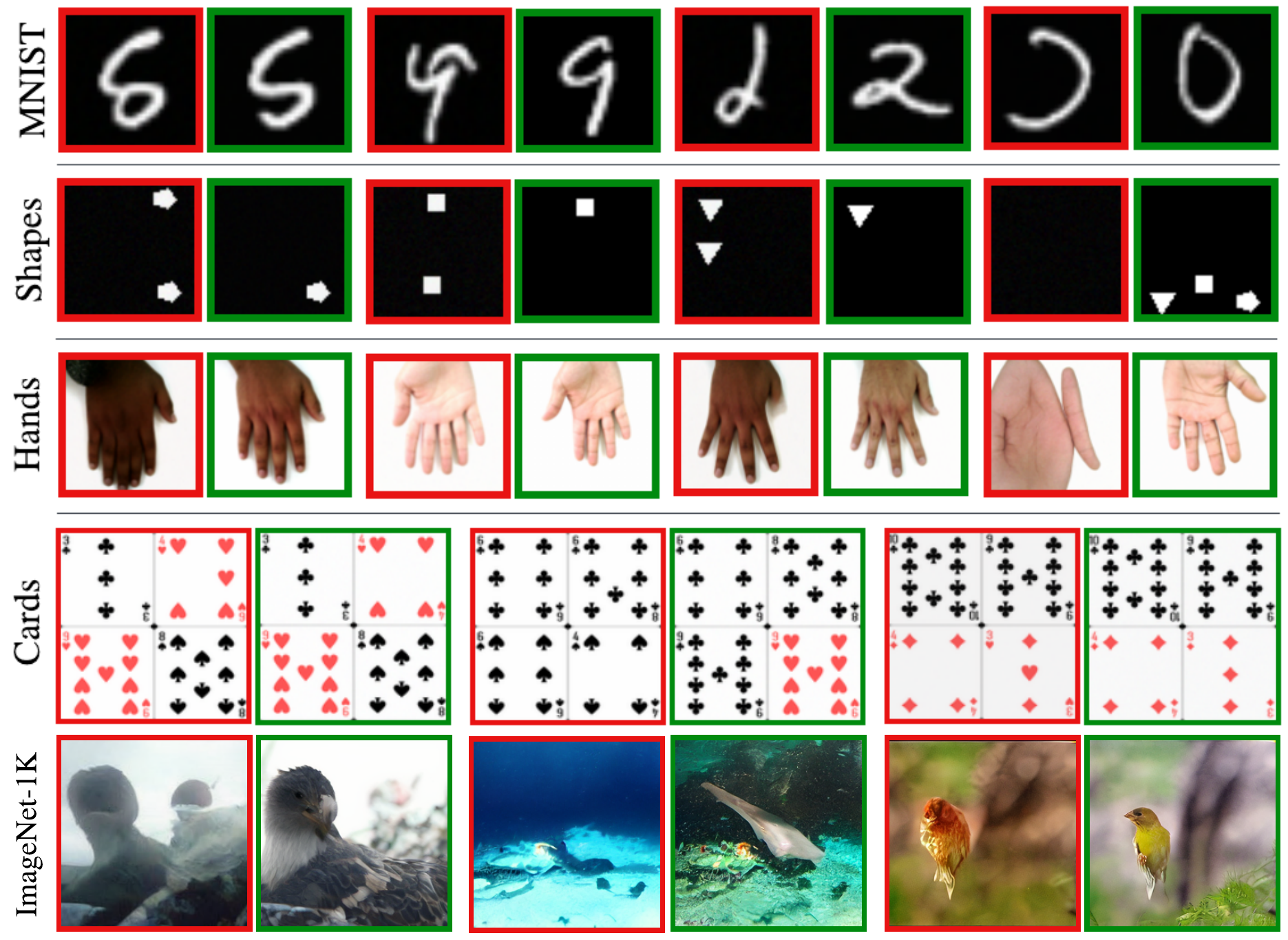}
  \caption{Qualitative examples of corrected hallucinations with VSM. Each pair shows hallucinated generations (red) and corrected valid generations (green) across datasets.}
  \label{fig:qual_fig}
\end{figure*}

\section{Experiments}
\label{sec:experiments}

% -------------------------------------------------------------------------

\subsection{Hallucination Detection Module}
\label{sec:hall_detect}

To operationalize Definition~\ref{def:hallucination}, we introduce hallucination detection modules $\mathcal{D} : \mathcal{X} \to \{0,1\}$ for each dataset that classify each generated sample $\tilde{x} \sim \mathcal{P}_{\theta}$ as hallucinated ($\mathcal{D}(\tilde{x}) = 1$) or non-hallucinated ($\mathcal{D}(\tilde{x}) = 0$). We consider four instantiations: (i) \textbf{human annotation} (Hands-11K), (ii) \textbf{classifier thresholding} (MNIST), (iii) \textbf{training-free rule/validator checks} (Shapes, Cards, ChessImages), and (iv) \textbf{improved precision and recall for real-world datasets} (ImageNet-1K). We calibrate $\mathcal{D}$ such that for all real samples  $\Pr[\mathcal{D}(x)=1 \mid x \sim \mathcal{P}_{\mathrm{data}}]=0$, ensuring that detected hallucinations primarily reflect implausible generations from $\mathcal{P}_\theta$ rather than detector bias.

% -------------------------------------------------------------------------

\begin{table*}[t]
  \centering
  \fontfamily{ptm}\selectfont
  \footnotesize
  \setlength{\tabcolsep}{0pt}
  \renewcommand{\arraystretch}{0.9}
  \captionsetup{skip=0.3em}
  \resizebox{\textwidth}{!}{%
    \begin{tabular}{@{}p{2.9cm}
                    >{\centering\arraybackslash}p{1.45cm}
                    >{\centering\arraybackslash}p{1.45cm}
                    >{\centering\arraybackslash}p{1.45cm}
                    >{\centering\arraybackslash}p{1.45cm}
                    >{\centering\arraybackslash}p{1.45cm}
                    >{\centering\arraybackslash}p{1.45cm}
                    >{\centering\arraybackslash}p{1.45cm}
                    >{\centering\arraybackslash}p{1.45cm}@{}}
      \toprule
      \textbf{Method} &
      \multicolumn{4}{c}{\textbf{Hands-11K}} &
      \multicolumn{4}{c}{\textbf{MNIST}} \\
      \cmidrule(lr){2-5}\cmidrule(lr){6-9}
      & \textbf{C-FID}$\downarrow$ & \textbf{FID}$\downarrow$ & \textbf{FLD}$\downarrow$ & \textbf{H\%}$\downarrow$
      & \textbf{C-FID}$\downarrow$ & \textbf{FID}$\downarrow$ & \textbf{FLD}$\downarrow$ & \textbf{H\%}$\downarrow$ \\
      \midrule
      Fine-tune LDM & \textbf{4.46} & 57.56 & 18.47 & 25.66 & 9.73 & 50.61 & 17.53 & 0.31 \\
      \rowcolor{faintgreen}
      Fine-tune LDM + VSM & 4.98 & \textbf{53.24} & \textbf{16.64} & \textbf{19.66} & \textbf{3.81} & \textbf{31.27} & \textbf{4.42} & \textbf{0.24} \\
      \midrule
      \textbf{Method} &
      \multicolumn{4}{c}{\textbf{ChessImages}} &
      \multicolumn{4}{c}{\textbf{ImageNet-1K}} \\
      \cmidrule(lr){2-5}\cmidrule(lr){6-9}
      & \textbf{C-FID}$\downarrow$ & \textbf{FID}$\downarrow$ & \textbf{FLD}$\downarrow$ & \textbf{H\%}$\downarrow$
      & \textbf{CLIP-Prec.}$\uparrow$ & \textbf{CLIP-Rec.}$\uparrow$ & \textbf{FLD}$\downarrow$ & \textbf{FID}$\downarrow$ \\
      \midrule
      Fine-tune LDM & \textbf{3.73} & 42.49 & 73.92 & 17.50 & 0.58 & 0.15 & 60.44 & 73.51 \\
      \rowcolor{faintgreen}
      Fine-tune LDM + VSM & 4.16 & \textbf{23.50} & \textbf{48.84} & \textbf{15.66} & \textbf{0.71} & \textbf{0.48} & \textbf{52.77} & \textbf{62.27} \\
      \bottomrule
    \end{tabular}%
  }
  \caption{Variance-head-only fine-tuning. Adding VSM during fine-tuning reduces hallucinations compared to fine-tuning without VSM, while preserving quality metrics. These results suggest that VSM can serve as an effective corrective mechanism for pretrained checkpoints.}
  \label{tab:finetune_table}
\end{table*}

\begin{figure*}[t]
  \centering
  \includegraphics[width=\textwidth]{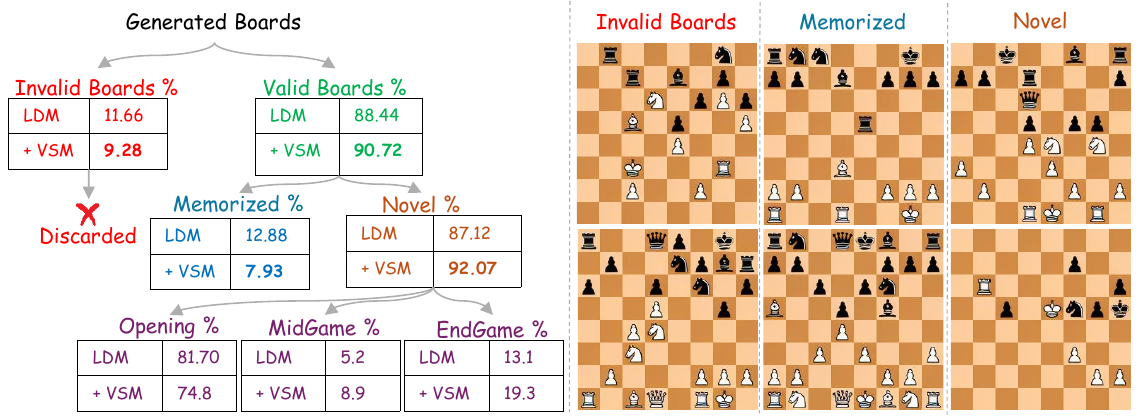}
  \caption{Categorization of generated chessboards into invalid (hallucinated), memorized (seen in train), and generalized (novel) samples. VSM consistently outperforms the LDM baseline.}
  \label{fig:buckets}
\end{figure*}

\vspace{-1em}

\subsection{Experimental Setup}
\label{sec:experimental-setup}

\subsubsection{\textbf{Datasets}}
We evaluate on both synthetic and real-world image datasets, and additionally propose two novel datasets designed for systematic hallucination analysis. A key dataset attribute is the number of \textit{semantic classes}, i.e., the set of structurally valid configurations/categories each sample can belong to. Constrained datasets (e.g., MNIST with 10 digits) offer limited class spaces, whereas combinatorial datasets (e.g., the proposed ChessImages dataset with $\sim10^{44}$ valid board states) exhibit extreme diversity, making hallucinations easier to surface. Our proposed \textit{Cards} and \textit{ChessImages} datasets combine extremely large semantic spaces with efficient training-free validators, enabling large-scale hallucination studies.

\subparagraph{\textbf{\textit{Datasets with simple semantic classes:}}} \noindent {{\textit{(i) 1D and 2D Gaussian mixtures:}}}
For 1D, we sample from a three-mode Gaussian mixture with means $\{1.0,1.5,2.0\}$ and $\sigma=0.035$.
For 2D, we use a $5\times5$ grid of Gaussians with $\sigma=0.02$.
Following \cite{mode_interpolation}, the data support is defined as $\pm6\sigma$ around each mean; samples outside are labeled hallucinated.
We train a diffusion denoising model on $5\times 10^4$ data points and draw $10^6$ samples at inference. \noindent \textit{(ii) MNIST:}
MNIST consists of $28\times28$ grayscale digit images (0-9) \cite{mnist}. A CNN trained on MNIST (99.5\%+ test acc.) flags outputs with confidence below $0.98$. \noindent \textit{(iii) Shapes:} Shapes contains $64\times64$ images split into three vertical regions, each assigned a square, pentagon, or triangle \cite{alaa2022faithful}. Valid images have at most one shape per region, yielding 6 semantic classes. Hallucinations include duplicates, missing shapes, or shapes in wrong regions. A template-matching pipeline is used as the hallucination detection module that achieves 100\% region-and-shape accuracy on real data. \noindent \textit{(iv) Hands:} Hands-11K contains $128\times128$ images of human hands with exactly five fingers \cite{hands11k}.
Hallucinations include missing/extra/malformed fingers.
Three human annotators identify hallucinations.
Semantic classes: $2$ orientations $\times2$ (palm up/down) $=4$.~\Cref{tab:dataset_charact} details dataset characteristics. 

\subparagraph{\textbf{\textit{Datasets with extreme semantic class spaces:}}} \textit{\textbf{(i) Cards (proposed):}} Synthetic $128\times128$ images arranged as a $2\times2$ grid of playing cards (Ace to 10), with standard templates from Wikipedia.\footnote{\url{https://en.wikipedia.org/wiki/Playing_card}}
A generation is hallucinated if symbol count mismatches value, color is incorrect, symbols are missing/invalid, or conflicting symbols appear.
Detection is completely automated via template matching (100\% accurate on the dataset). \textit{\textbf{(ii) ChessImages (proposed):}}
$256\times256$ chessboards rendered from FEN strings sampled from VALUED \cite{saha2023valued},
with standardized templates.\footnote{\url{https://en.wikipedia.org/wiki/Template:Chess_diagram}} We reconstruct FEN via template matching (100\% accurate on the real samples), then validate legality with \texttt{python-chess}. More details about \texttt{python-chess} and samples from proposed datasets are included in the Appendix. \textit{(iii) ImageNet-1K:}
We additionally evaluate on the real-world ImageNet-1K \cite{russakovsky2015imagenetlargescalevisual}, using the \emph{train split for training}. Since explicit hallucination detectors are not available at ImageNet scale, we evaluate improved precision and improved recall \cite{improvedprecision} in both Inception and CLIP feature spaces, together with FID (see \cref{tab:finetune_table_all}).

% \vspace{-1em}

\subsubsection{\textbf{Implementation Details:}} 
\subparagraph{\textbf{\textit{Models:}}} We test our method by integrating it with both pixel-space diffusion (DDPM~\cite{ho2020ddpm}) and latent diffusion (LDM \cite{ldm}). Within LDM, we report results for: (i) unconditional generation (LDM-UC), (ii) text-conditional generation (LDM-C), and (iii) conditional generation with prompt tuning (LDM-PT) \cite{mahajan2024prompting}. Prompt tuning details are provided in the Appendix. Where available, we also compare against the hallucination reduction baselines AAM \cite{adapt_attention}, and Dynamic Guidance \cite{triaridis2025mitigatingdiffusionmodelhallucinations}.
\subparagraph{\textbf{\textit{Training regimes:}}}
We evaluate two training regimes: (i) \textit{from-scratch training}, where the full model is trained from random initialization (~\cref{tab:finetune_table_all}), and (ii) \textit{variance-head-only training}, which mirrors the common practice of extending publicly available pretrained checkpoints to a target dataset. In this setting, we attach a variance head, to a pretrained checkpoint and subsequently fine-tune the model(~\cref{tab:finetune_table}).
\subparagraph{\textbf{\textit{Metrics:}}} On datasets with explicit hallucination detectors (Hands, MNIST, Shapes, Cards, ChessImages), we report:
(i) hallucination rate $H\%$ (lower is better),
(ii) FID (Inception features) and C-FID (CLIP features) which cpatures image fidelity,
and (iii) FLD \cite{fld} computed \emph{only on non-hallucinated samples} that measures fidelity, diversity, and novelty in feature space. For synthetic 1D/2D mixtures with closed form score, we report score error via $\Delta s_{RMSE}$.
On ImageNet-1K, following~\cite{triaridis2025mitigatingdiffusionmodelhallucinations} we use improved precision in CLIP feature space as a measure of hallucinations.

\vspace{-1.6em}

% -------------------------------------------------------------------------
\subsection{Results}
\label{sec:results}

\noindent \textbf{\textit{VSM reduces score error and hallucinations when support is measurable:}}
We first validate VSM in settings where hallucinations can be defined precisely and score error can be measured directly. On 1D and 2D Gaussian mixtures, where samples outside the effective data support are labeled hallucinated, VSM reduces both $\Delta s_{\mathrm{RMSE}}$ and the hallucination rate (\cref{tab:score_error_table}). This matches the intended effect of VSM: variance-guided score modulation better aligns the learned score with the ground-truth score and suppresses probability mass leakage into low-density regions. We observe the same trend on the higher-dimensional Hands-11K dataset, where VSM consistently lowers both score error and hallucination rate, showing that the mechanism extends beyond synthetic mixtures to real image data.

\noindent \textbf{\textit{VSM reduces hallucinations across diverse image datasets:}} 
We next evaluate on datasets with explicit hallucination detectors spanning both low-cardinality semantic spaces (MNIST, Shapes, Hands-11K) and large combinatorial spaces (Cards, ChessImages, ImageNet-1k). 
\Cref{tab:finetune_table_all} shows that VSM consistently reduces hallucination rate across both pixel-space diffusion (DDPM) and latent diffusion (LDM), and across conditioning regimes (unconditional, text-conditioned, prompt-tuned). 
Notably, hallucination reduction does not trade off against sample quality: in many cases VSM also improves fidelity metrics (FID/C-FID) and novelty/diversity as measured by FLD. 
On ChessImages, where legality is rule-checkable and the semantic space is extreme, VSM substantially reduces invalid boards while preserving visual structure, enabling controlled studies of validity under combinatorial constraints. Since explicit hallucination detectors are unavailable at ImageNet scale, we evaluate hallucination mitigation indirectly using CLIP-space precision and recall, together with FID, as reported in \cref{tab:finetune_table_all}. Higher precision indicates that a larger fraction of generated samples lies within the support of the real data distribution, whereas higher recall reflects better coverage of its modes. Relative to the baseline LDM-UC model, VSM improves both precision and recall, suggesting that it reduces off-support generations while simultaneously improving distributional coverage. In comparison to~\cite{triaridis2025mitigatingdiffusionmodelhallucinations}, which attains higher precision at the expense of a substantial drop in recall indicative of mode collapse, VSM achieves markedly stronger recall while maintaining a closely comparable precision. These results suggest that VSM mitigates hallucinations without sacrificing sample diversity.

\noindent \textbf{\textit{Qualitative results:}} 
\Cref{fig:qual_fig} shows representative hallucinations corrected by VSM. 
Across datasets, VSM suppresses off-manifold artifacts (e.g., invalid card symbols,malformed fingers), while preserving global structure and visual fidelity. These examples qualitatively align with the quantitative trend that VSM reduces invalid generations without introducing any artifacts.

% -------------------------------------------------------------------------
\vspace{-1.1em}

\subsection{Generalization vs Memorization in ChessImages}
\label{sec:gen-vs-mem}

The proposed ChessImages dataset enables analysis beyond hallucination rates because legality is rule-checkable and the semantic space is extremely large. After discarding invalid boards, we partition valid samples into \emph{memorized} boards that exactly match training positions and \emph{generalized} boards that are valid but unseen during training (Definition~\ref{def:memorized-generalized}). An ideal generator should increase the fraction of valid generations while shifting mass toward generalized positions. As shown in \cref{fig:buckets}, VSM outperforms the baseline LDM by reducing invalid generations and increasing the share of valid novel boards, enabling controlled evaluation of generalization in a rule-valid setting.

\vspace{-1em}

% -------------------------------------------------------------------------
\subsection{Variance-Head-Only Fine-tuning}
\label{sec:finetune}

% Our method (VSM) can be attached to pretrained diffusion checkpoints by training only the variance head and optimizing $\mathcal{L}_{\mathrm{VLB}} + \eta(t)\mathcal{L}_{\mathrm{VSM}}$.
% This enables a drop-in hallucination reduction mechanism for existing diffusion checkpoints without changing sampling-time compute.
Table~\ref{tab:finetune_table} reports variance-head-only fine-tuning results. Across datasets, incorporating VSM during fine-tuning consistently reduces hallucinations compared to fine-tuning without VSM, while preserving fidelity and diversity. These results suggest that VSM can serve as an effective corrective mechanism for pretrained checkpoints, offering a practical alternative to from-scratch training, which is often computationally expensive.

% -------------------------------------------------------------------------

\vspace{-1em}

\subsection{Ablation Studies}
\label{sec:ablations}
\begin{figure}[t]
    \centering
    \includegraphics[width=\columnwidth]{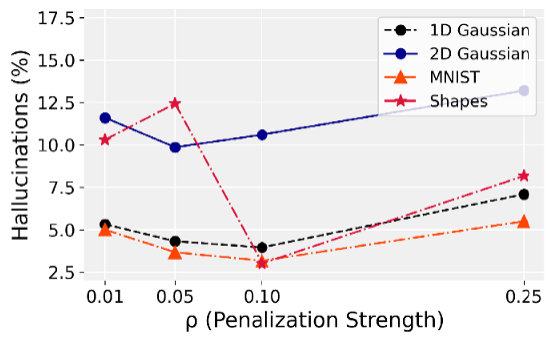}
    \caption{Increasing $\rho$ decreases hallucinations until it start increasing it back because diffusion loss gets excessively down-weighted causing suboptimal results. H\% for 1D and 2D are scaled by $10^3$ and $10^1$ respectively.}
    \label{fig:ablation_rho}
\end{figure}
We ablate (i) the regularizer strength $\rho$ check the effect of strength of VSM on hallucinations, and
(ii) the time-dependent scaling schedule $\eta(t)$ to assess the impact of late-stage emphasis. We observe increasing $\rho$ reduces hallucinations (H\%) by suppressing low-support mass, however after a point increased strength can overpower the diffusion loss, increasing hallucinations, therefore, we use a sweet spot $\rho=0.1$.

\begin{figure}[t]
  \centering
  \scriptsize
  \fontfamily{ptm}\selectfont
  \setlength{\tabcolsep}{3.2pt}
  \renewcommand{\arraystretch}{1.03}
  \captionsetup{skip=0.2em}

  \begin{minipage}[c]{0.56\columnwidth}
    \centering
    \vspace{0pt}
    \resizebox{\linewidth}{!}{%
      \begin{tabular}{@{}lccc@{}}
        \toprule
        \textbf{Schedule} & \textbf{C-FID}$\downarrow$ & \textbf{FLD}$\downarrow$ & \textbf{H\%}$\downarrow$ \\
        \midrule
        $\eta(t)=\rho(1-\bar{\alpha}_t)$ & 17.18 & 19.30 & 7.83 \\
        $\eta(t)=\rho/(1-\bar{\alpha}_t)$ & 11.05 & 7.61 & 5.00 \\
        $\eta(t)=\rho/\sqrt{1-\bar{\alpha}_t}$ & \textbf{3.91} & \textbf{6.99} & \textbf{3.50} \\
        \bottomrule
      \end{tabular}%
    }
  \end{minipage}\hfill
  \begin{minipage}[c]{0.40\columnwidth}
    \centering
    \vspace{0pt}
    \includegraphics[width=\linewidth]{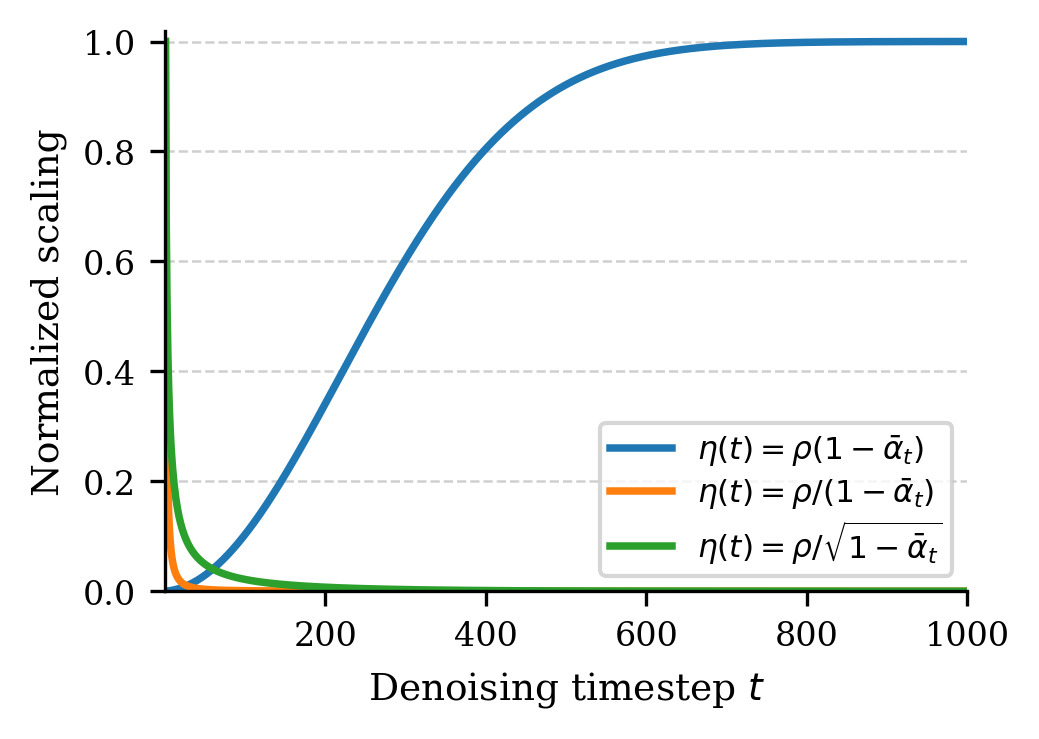}
  \end{minipage}

  \vspace{-0.4em}
  \caption{Ablation of time-dependent scaling schedules $\eta(t)$ on MNIST. The inverse square-root schedule achieves the lowest C-FID, FLD, and hallucination rate.}
  \label{fig:ablate_eta}
  \vspace{-0.6em}
\end{figure}

We ablate three choices for the time-dependent scaling schedule $\eta(t)$ on MNIST, as reported in \cref{fig:ablate_eta}. The results show that late-stage upweighting of the VSM penalty is important for suppressing hallucinations, but that overly aggressive scaling is suboptimal. In particular, the linear schedule $\eta(t)=\rho(1-\bar{\alpha}_t)$ performs worst across all metrics, indicating that weak late-stage (lower t) regularization is insufficient. The fully inverse schedule $\eta(t)=\rho/(1-\bar{\alpha}_t)$ improves substantially, but remains inferior to the inverse square-root form. Overall, $\eta(t)=\rho/\sqrt{1-\bar{\alpha}_t}$ achieves the best performance on C-FID, FLD, and hallucination rate, suggesting that moderate growth of the penalty toward the final denoising steps provides the best balance between preserving global structure and enforcing effective smoothing.

\section{Conclusion} 
% \vspace{-0.5em}
We present a density-based perspective on hallucinations in diffusion models, showing that excessive score smoothness causes probability mass to leak into off-support regions at an exponential rate controlled by the Lipschitz constant. Motivated by this insight, we introduce VSM, an architecture-agnostic method that increases the score Jacobian to suppress such leakage and thereby mitigate hallucinations. Extensive experiments on synthetic data, real-world datasets, and newly introduced challenge benchmarks show that VSM consistently reduces hallucinations while preserving high fidelity and diversity. More broadly, our work not only provides a practical and effective mitigation strategy, but also establishes a theoretical foundation for understanding hallucinations in diffusion models and contributes benchmark settings for their systematic evaluation.

\noindent{\bf Limitations:} Our approach is designed to mitigate hallucinations rather than eliminate them entirely. Additionally, a systematic understanding of hallucinations in natural image datasets, as well as reliable metrics to detect and quantify them in such domains, remains an open problem for future work.
\appendix

\twocolumn[
\begin{center}
    {\LARGE \textbf{Appendix}}
\end{center}
\vspace{1em}
]

\section{Towards Zero Hallucinations during generation}

% \vspace{-5mm}

\begin{figure}[h]
  \centering
  \includegraphics[width=0.99\linewidth]{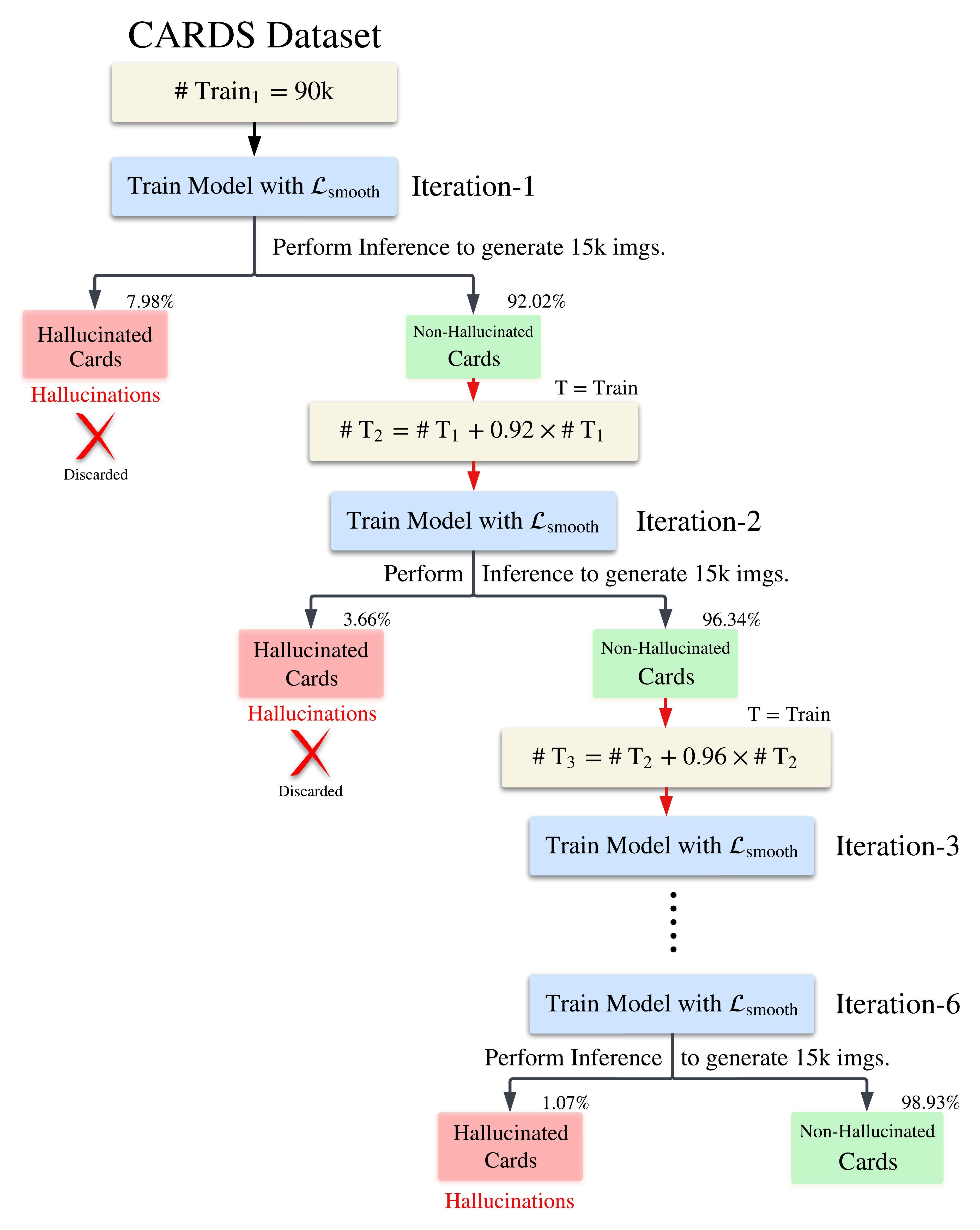}
  \caption{Iterative Training while appending Non-Hallucinated Images to $\mathcal{P}_{\text{train}}$}
  \label{fig:iterative_training}
  \vspace{-1em}
\end{figure}

\noindent
We propose a way that drives the hallucination rate toward zero. Figure~\ref{fig:iterative_training} illustrates the effectiveness of our proposed method ($\mathcal{L}_{\text{VSM}}$ loss) within an iterative training strategy to systematically reduce hallucinations during image generation. Beginning with a base model trained on an initial dataset of 90K images, each iteration involves generating 15,000 new card images, filtering out hallucinated samples, and appending only valid, non-hallucinated cards to the training set for the next iteration. This progressively refined dataset, denoted as $p_{\text{train}}$, is then used to retrain the model again from scratch. As shown, hallucination rates drop sharply from 7.98\% in iteration-1 to 1.07\% by iteration-6, while the proportion of non-hallucinated outputs steadily increases to 98.93\%. This iterative bootstrapping approach demonstrates how $\mathcal{L}_{\text{VSM}}$ enables the model to internalize valid patterns and avoid degenerate generations over time, leading to near-zero hallucinations during generation.

\begin{minipage}{0.4\linewidth}
  \centering
  \scalebox{0.95}{%
    \begin{tabular}{@{}l c@{}}
      \toprule
      \textbf{Iteration} & \textbf{Hal. Rate (\%)} \\
      \midrule
      Iteration-1 & 7.98 \\
      Iteration-2 & 3.66 \\
      Iteration-3 & 2.74 \\
      Iteration-4 & 1.82 \\
      Iteration-5 & 1.19 \\
      Iteration-6 & 1.07 \\
      \bottomrule
    \end{tabular}%
  }
\end{minipage}%
\hfill
\begin{minipage}{0.6\linewidth}
\captionof{table}{Reduction in hallucination rate over training iterations using the proposed $\mathcal{L}_{\text{VSM}}$ objective. As iterative training progresses, the rate of hallucinated generations decreases substantially, validating a way towards zero hallucinations.}
\label{tab:hallucination_reduction}
\end{minipage}

\section{More Details on Proposition 4.1}
\noindent\textbf{Setup.}
Following the Union of Manifolds Hypothesis (UMH)~\cite{umh}, we assume the
\emph{support} of the data measure $\mathcal{P}_{\mathrm{data}}$ admits a decomposition into a
disjoint union of the closures of $K$ connected, low-dimensional manifolds $\{\mathcal{A}_k\}_{k=1}^{K}$:
\[
\mathcal{S} = \operatorname{supp}(\mathcal{P}_{\mathrm{data}})
=
\bigsqcup_{k=1}^{K} \operatorname{cl}(\mathcal{A}_k)
\subseteq \mathbb{R}^d,
\qquad
\dim(\mathcal{A}_k)=d_k<d,
\]
where $\bigsqcup$ denotes a disjoint union and $\operatorname{cl}(\cdot)$ denotes closure in the ambient
space $\mathcal{X} \subseteq \mathbb{R}^d$. We define the off-support region as:
\[
\mathcal{O} := \mathbb{R}^d \setminus \mathcal{S}.
\]

By definition of support, $\mathcal{P}_{\mathrm{data}}(\mathcal{O})=0$, i.e., the data measure assigns
zero probability mass to the off-support region $\mathcal{O}$. For some radius $r>0$, define the tubular neighborhood:
\[
U := \{x \in \mathbb{R}^d : \operatorname{dist}(x,\mathcal{S}) \le r\}.
\]

\noindent \textbf{Regularity properties.} 
We utilize certain 
regularity properties of the learned diffusion density $p_\theta$ in tubular region $U$. We state following properties (P) and assumptions (A):
\begin{description}
\item[(P1)] \textbf{Positivity and continuity of the model density.}
The DDPM reverse process~\cite{ho2020ddpm} defines the generated distribution as,
\[
p_\theta(x_0)
\;=\;
\int p(x_T)\prod_{t=1}^{T}p_\theta(x_{t-1}\mid x_t)\,dx_{1:T},
\]
where $p(x_T)=\mathcal{N}(0,I)$ and each reverse transition
$p_\theta(x_{t-1}\mid x_t)=\mathcal{N}\!\big(\mu_\theta(x_t,t),\,\sigma_t^2 I\big)$
has non-degenerate covariance ($\sigma_t^2>0$).
Because every Gaussian component is strictly positive on $\mathbb{R}^d$,
the marginal $p_\theta(x_0)$, being a continuous mixture of such
Gaussians, is itself strictly positive and continuous on $\mathbb{R}^d$.
This is a standard property of convolutions with nondegenerate Gaussian
kernels~\cite{song2020sde,folland1999real}.
We further assume that the network parameterization $\mu_\theta(\cdot,t)$ is
smooth, which, combined with the smoothness of Gaussian convolutions,
ensures $p_\theta$ is differentiable on the region of interest so that the
score $s_\theta(x) =\nabla\log p_\theta(x)$ is well-defined wherever it is
used below.

\item[(A1)] \textbf{Compactness of the support and the tubular neighborhood.}
Since each $\mathcal{A}_k$ is bounded (a natural assumption for real-world data
residing in a finite region of $\mathbb{R}^d$ e.g. $\mathcal{X}=[0, 1]^d$ for images datasets), each closure
$\operatorname{cl}(\mathcal{A}_k)$ is compact. As a finite union of compact sets,
$\mathcal{S}$ is compact. For any $r>0$, the tubular neighborhood
$U=\{x\in\mathbb{R}^d:\operatorname{dist}(x,\mathcal{S})\le r\}$ is then
closed and bounded, hence also compact.

\item[(A2)] \textbf{Local Lipschitz score regularity and boundedness on $U$.}
We assume the score $s_\theta(x) =\nabla \log p_\theta(x)$ is $L$-Lipschitz on $U$, i.e.,
$\|s_\theta(x)-s_\theta(y)\|\le L\|x-y\|$ for all $x,y\in U$.
Since Lipschitz functions are continuous and $U$ is compact by~\textup{(A2)},
$s_\theta$ is bounded on $U$, and we define
\[
S \;:=\; \sup_{x\in U}\|s_\theta(x)\| \;<\;\infty.
\]
\emph{Remark.} The Lipschitz assumption is motivated by the smoothing
induced by Gaussian perturbations at nonzero noise
levels~\cite{song2020sde}. However, Lipschitz singularities may arise
near the zero-noise limit without additional
control~\cite{yang2024lipschitzsingularitiesdiffusionmodels}. We avoid
such cases by applying VSM for $t>0$.

\item[(P2)] \textbf{Boundary density lower bound.}
Since $p_\theta$ is continuous on $\mathbb{R}^d$~\textup{(P1)} and
$\mathcal{S}$ is compact~\textup{(A2)}, the extreme value
theorem~\cite{rudin1976principles} guarantees that $p_\theta$ attains its
minimum on $\mathcal{S}$. Moreover, since $p_\theta$ is strictly positive
on $\mathbb{R}^d$~\textup{(P1)}, this minimum is strictly positive:
\[
% \inf_{z\in \mathcal{S}} p_\theta(z)
% \;=\;
\min_{z\in \mathcal{S}} p_\theta(z)
\;=:\;
C_b
\;>\; 0.
\]
\end{description}

Together, properties (P1), (P2) and assumptions (A1), (A2) establish that diffusion models produce smooth, strictly positive, and well-behaved densities around the data support $\mathcal{S}$, making them amenable to the quantitative analysis in Proposition 4.1.
Specifically, the proof relies on only two genuine assumptions: compactness of the data support~\textup{(A1)} and Lipschitz regularity of the learned score~\textup{(A2)}, the remaining ingredients~\textup{(P1), (P2)} follow from the nondegenerate Gaussian structure of the DDPM reverse process.

\label{app:proof_gap_lower_bound}
\noindent\textit{\textbf{Proposition 4.1 (Relationship Between Score Smoothness and Hallucinations)}}
\textit{Let $x\in\mathcal{O}$ be an off-manifold point with
$\delta_x := \mathrm{dist}(x,\mathcal{S})\le r$, so that $x$ lies in the
tubular neighborhood $U$. Under~\textup{(P1)}, \textup{(A1)},
\textup{(A2)}, and~\textup{(P2)}, the model density admits the lower bound:
$$p_\theta(x) \;\ge\; C_b \exp\!\left( -S\,\delta_x - \frac{L}{2}\,\delta_x^2 \right) \;>\; 0.$$}
\begin{proof}
Since $\mathcal{S}$ is compact and nonempty~\textup{(A2)}, the continuous
function $z\mapsto \|x-z\|$ attains its minimum over $\mathcal{S}$. Let
$y\in\mathcal{S}$ be a minimizer. Then,
\[
\|x-y\|=\delta_x,
\qquad
y\in\mathcal{S}\subseteq U,
\qquad
\text{and}\qquad
\delta_x\le r \;\Rightarrow\; x\in U.
\]
Moreover, for any $t\in[0,1]$ the point $z_t:=y+t(x-y)$ satisfies
$\operatorname{dist}(z_t,\mathcal{S})\le\|z_t-y\|=t\,\delta_x\le r$,
so the entire segment $[y,x]$ lies in $U$.

Define $f(z):=\log p_\theta(z)$. By~\textup{(P1)}, $f$ is differentiable
on $U$ with gradient $\nabla f(z)=s_\theta(z)$, which is $L$-Lipschitz
on $U$ by~\textup{(A3)}. By Taylor's theorem with integral remainder,
\[
f(x)
\;=\;
f(y)+\langle \nabla f(y),\,x-y\rangle
+\int_0^1\!\langle \nabla f(y+t(x{-}y))-\nabla f(y),\;x-y\rangle\,dt.
\]
The integral term is bounded below using Cauchy-Schwarz~\cite{rudin1976principles} and the
$L$-Lipschitz property of $\nabla f$ on $U$ (noting that the segment
$[y,x]\subseteq U$):
\[
\int_0^1\!\langle \nabla f(y+t(x{-}y))-\nabla f(y),\;x-y\rangle\,dt
\;\ge\;
-\int_0^1 Lt\,\|x-y\|^2\,dt
\;=\;
-\tfrac{L}{2}\,\delta_x^2.
\]
Therefore,
\begin{equation}
\label{eq:quadratic_lower_bound}
f(x)\;\ge\; f(y) + \langle \nabla f(y),\,x-y\rangle - \tfrac{L}{2}\,\delta_x^2.
\end{equation}
Next, by Cauchy--Schwarz and the definition of
$S:=\sup_{z\in U}\|s_\theta(z)\|<\infty$~\textup{(A3)},
\[
\langle \nabla f(y),\,x-y\rangle
\;\ge\;
-\|\nabla f(y)\|\,\|x-y\|
\;\ge\;
- S\,\delta_x.
\]
Substituting into~\eqref{eq:quadratic_lower_bound} yields
\[
\log p_\theta(x)
\;\ge\;
\log p_\theta(y) - S\,\delta_x - \tfrac{L}{2}\,\delta_x^2.
\]
Exponentiating both sides gives
\[
p_\theta(x)\;\ge\; p_\theta(y)\,\exp\!\Big(-S\,\delta_x
  -\tfrac{L}{2}\,\delta_x^2\Big).
\]
Finally, since $y\in\mathcal{S}$, we have
$p_\theta(y)\ge C_b>0$~\textup{(P2)}, hence
\[
p_\theta(x)
\;\ge\;
C_b\,\exp\!\Big(-S\,\delta_x-\tfrac{L}{2}\,\delta_x^2\Big)
> 0,\]
which proves the claimed bound.
\end{proof}

\section{Score difference correlates with Hallucinations} \label{sec:1d-2d}
% in your preamble:

% then, where you want the figure:
\begin{figure}[htbp]
  \centering
  \includegraphics[width=\linewidth]{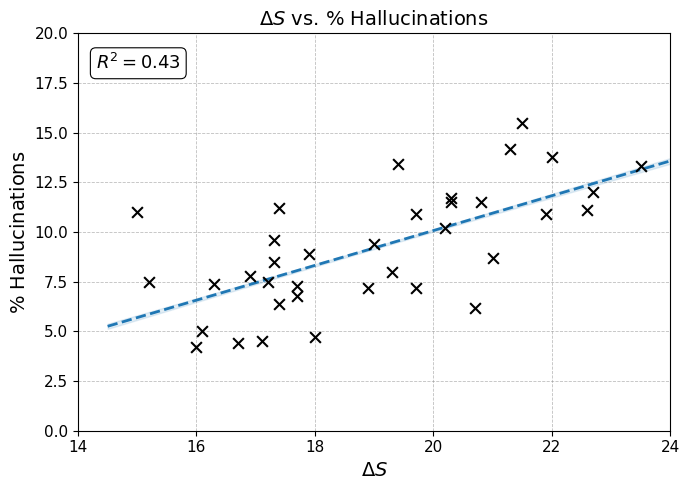}
  \caption{Increase in Score difference $\Delta s$ positively correlates with Hallucinations on Hands dataset.}
  \label{fig:score_hal}
\end{figure}
\noindent\textbf{Estimating} $\mathbf{s_{GT}}$:
For 1D and 2D datasets, we have closed form PDFs with fixed parametrs. Therefore, ground truth score can be obtained from closed form PDF: \(S_{GT}(x_t)=\frac{\sum_{m=1}^M-\frac{x_t-\mu_m}{\sigma^2}\exp\bigl(-\frac{(x_t-\mu_m)^2}{2\sigma^2}\bigr)}{\sum_{m=1}^M\exp\bigl(-\frac{(x_t-\mu_m)^2}{2\sigma^2}\bigr)}\). 
For image datasets, we do not have access to the groundtruth posterior induced by the forward noising process process at the inference time. Therefore, instead we invert the image to get $x_T$ from $x_0$ by forward noising give GT noise added, which is used for the calculating the expectation in equation 5.

\noindent\textbf{Results}:
We calculate $\Delta S$ as described in the section 4.1.
For 1D and 2D we already report the results in the Table 2 and describe them in the main paper. We also observe that the number of hallucinations is directly proportional to the score difference $\Delta S$ in the Hands dataset, as demonstrated in Fig.~\ref{fig:score_hal}.  This also motivates us to manipulate the learned score function to address the Hallucinations directly.

\section{Details on implementation of $\mathcal{L}_{VSM}$}

As seen in the main paper, \(\mathcal{L}_{VSM}\) penalizes small Jacobians of the learned score function. For data in \(\mathbb{R}^D\), the Jacobian of the score \(s : \mathbb{R}^D \to \mathbb{R}^D\) is a \(D\times D\) matrix.  In practice, we replace the exact derivatives with a centered finite‐difference approximation.

\noindent\textbf{1D case.}
When \(D=1\), \(S : \mathbb{R}\to\mathbb{R}\), the Jacobian reduces to the scalar derivative
\[
J_S(x)\;=\;\frac{d}{dx}S(x)
\;\approx\;
\frac{S(x + \varepsilon)\;-\;S(x - \varepsilon)}{2\,\varepsilon}\,.
\]

\noindent\textbf{2D case.}
When \(D=2\), write \(x=(x_1,x_2)\in\mathbb{R}^2\) and \(S=(S_1,S_2)\).  The Jacobian matrix \(J_S(x)\) has entries
\[
[J_S(x)]_{ij}
\;=\;
\frac{\partial S_i(x)}{\partial x_j}
\;\approx\;
\frac{S_i\bigl(x + \varepsilon\,e_j\bigr)\;-\;
      S_i\bigl(x - \varepsilon\,e_j\bigr)}{2\,\varepsilon}
\quad
(i,j=1,2),
\]
where \(e_1=(1,0)\), \(e_2=(0,1)\).  Equivalently,
\begin{align}
J_S(x)
\;\approx\;
\frac{1}{2\varepsilon}\,
&\begin{pmatrix}
S_1(x_1{+}\varepsilon,x_2) - S_1(x_1{-}\varepsilon,x_2)
& 
S_1(x_1,x_2{+}\varepsilon) - S_1(x_1,x_2{-}\varepsilon)
\\[6pt]
S_2(x_1{+}\varepsilon,x_2) - S_2(x_1{-}\varepsilon,x_2)
&
S_2(x_1,x_2{+}\varepsilon) - S_2(x_1,x_2{-}\varepsilon)
\end{pmatrix}.\nonumber
\end{align}

\noindent\textbf{Images case.} 
 The Jacobian of the Score is also the Precision matrix $-\Sigma^{-1}$. However, there are two problems in calculating the covariance matrix $\Sigma$, 1. Closed-form PDF is not available 2. Calculating and storing the Jacobian is not computationally feasible for high-dimensional image settings. Therefore, instead, we use the $\Sigma_{\theta}$ learned in the denoising process. We adopt the I-DDPM parameterization \cite{nichol2021improved} to learn variance, more details in section 4.3 of the main paper.  

\section{More details on the ChessImages dataset} \label{sec:chess_more}

\paragraph{Invalid Chessboard Detection:}

Section 5.1 in the main paper describes details about creating the ChessImages dataset. The validation module ensures that every generated chessboard image corresponds to a legal board configuration. This is achieved through a hybrid pipeline comprising visual and rule-based checks, designed to detect hallucinations automatically-board states violating chess semantics or displaying visual inconsistencies. We begin by reconstructing the FEN string from each rendered image using a template-matching-based parser, achieving 100\% reconstruction accuracy on the training set. However, given chess's combinatorial nature, no tractable algorithm can verify the reachability of arbitrary board states through legal move sequences. Hence, we instead focus on verifying the structural validity of the board state using syntactic and semantic criteria.For the validation of the FEN, we use \lstinline!status()!%
from the \lstinline!python-chess! library\footnote{\url{https://python-chess.readthedocs.io/en/latest/}}.
We construct a dataset with strong structural priors and verifiable correctness by enforcing these constraints. This eliminates manual annotation during hallucination detection and enables reproducible and objective evaluations in downstream generative modeling tasks.

A generated chessboard image is considered invalid if it meets any of the following criteria: (1) the extracted FEN string from the image has a similarity score below 50\%, indicating poor or ambiguous visual parsing; or (2) the parsed FEN fails legality checks using the python-chess library, such as having multiple kings of the same color, exceeding eight pawns per player, overlapping or missing pieces, or violating fundamental chess rules.

Below we list the rules used by the chess library’s \texttt{status()} check and, for each rule, we also show the images flagged as “hallucinated” because of violations of these rules in Fig.~\ref{fig:Chess_fig2} and Fig.~\ref{fig:Chess_fig2}.

(i) \textbf{Non‐empty board.}  
A valid FEN must contain at least one piece. Violations are flagged by \texttt{STATUS\_EMPTY}.

\medskip

(ii) \textbf{Exactly one white king.}  
There must be one (and only one) white king on the board. Violations are flagged by \\ \texttt{STATUS\_NO\_WHITE\_KING}, \texttt{STATUS\_TOO\_MANY\_KINGS}.

\medskip

(iii) \textbf{Exactly one black king.}  
There must be one (and only one) black king. Violations are flagged by \texttt{STATUS\_NO\_BLACK\_KING}, \texttt{STATUS\_TOO\_MANY\_KINGS}.

\medskip

(iv) \textbf{Piece-count limits.}  
No side may have more than 16 pieces. Violations are flagged by \texttt{STATUS\_TOO\_MANY\_WHITE\_PIECES}, \\ \texttt{STATUS\_TOO\_MANY\_BLACK\_PIECES}.

\medskip

(v) \textbf{Pawn-count limits.}  
No side may have more than eight pawns. Violations are flagged by \texttt{STATUS\_TOO\_MANY\_WHITE\_PAWNS}, \texttt{STATUS\_TOO\_MANY\_BLACK\_PAWNS}.

\medskip

(vi) \textbf{No pawns on back-rank.}  
Pawns may not appear on ranks 1 or 8. Violations are flagged by \texttt{STATUS\_PAWNS\_ON\_BACKRANK}.

\medskip

(vii) \textbf{Legal castling rights.}  
Castling flags must match the king/rook placement. Violations are flagged by \texttt{STATUS\_BAD\_CASTLING\_RIGHTS}.

\medskip

(viii) \textbf{Valid en passant.}  
The ep‐target square must be reachable by a two‐square pawn move. Violations are flagged by \\ \texttt{STATUS\_INVALID\_EP\_SQUARE}.

\medskip

(ix) \textbf{No opposite‐side check.}  
The side that is not to move cannot be checked. Violations are flagged by \texttt{STATUS\_OPPOSITE\_CHECK}.

\medskip

(x) \textbf{Max two checking pieces.}  
At most two pieces may deliver a check. Violations are flagged by \texttt{STATUS\_TOO\_MANY\_CHECKERS}.

\medskip

(xi) \textbf{Possible check sequence.}  
Checks must arise via a legal move (including ep pushes). Violations are flagged by \\  \texttt{STATUS\_IMPOSSIBLE\_CHECK}.

A Standard FEN string contains \texttt{"<PiecePlacement> \\<ActiveColor> <CastlingRights> <EnPassant> \\<HalfmoveClock> <FullmoveNumber>"}. Template matching can only give us \texttt{<PiecePlacement>}. Therefore, the rules (vii, viii, ix) that use the information from parts of the FEN other than \\  \texttt{<PiecePlacement>} are ignored in our work. 

\begin{figure}[htbp]
  \centering
  \includegraphics[width=0.95\linewidth]{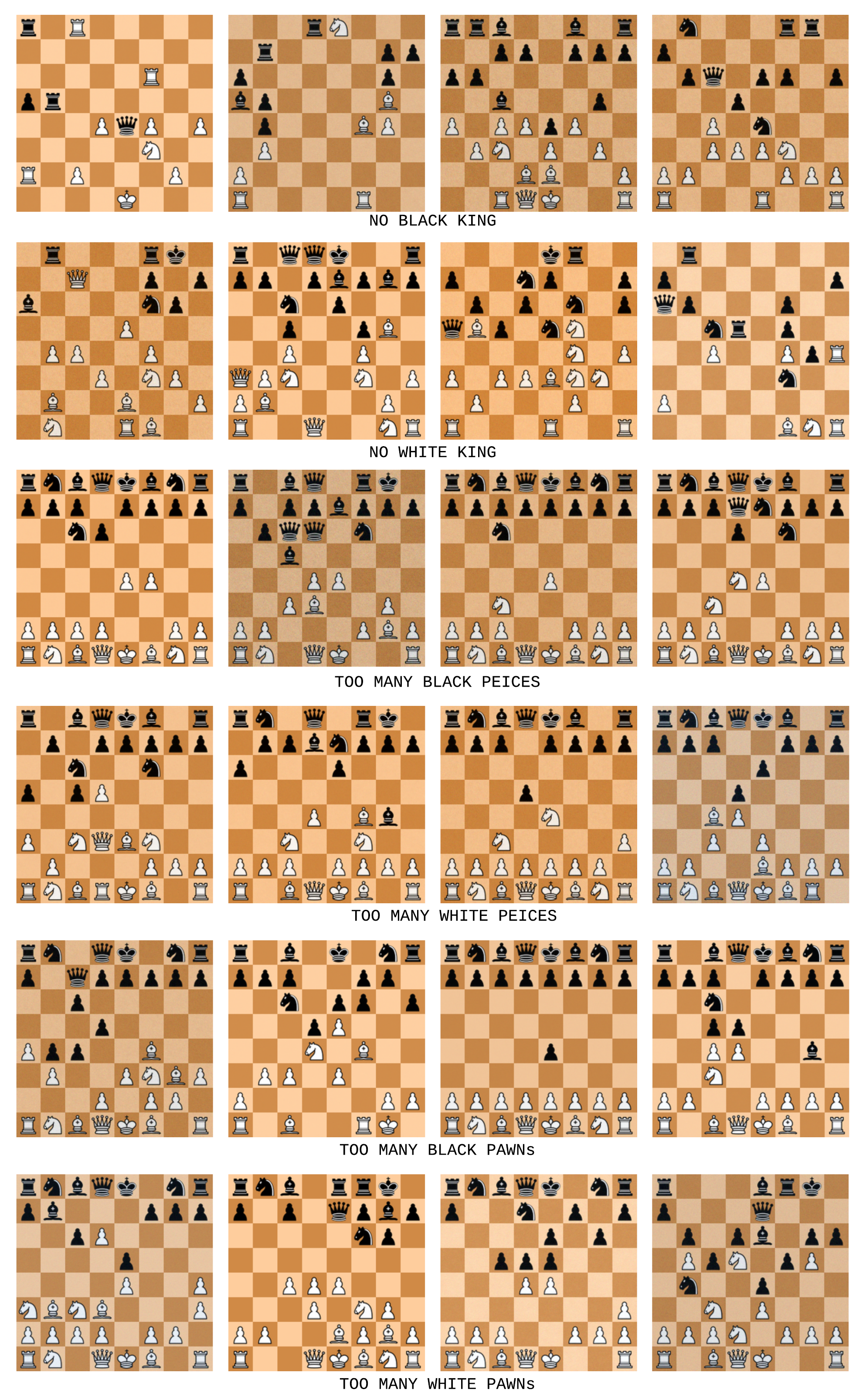}
  \includegraphics[width=0.95\linewidth]{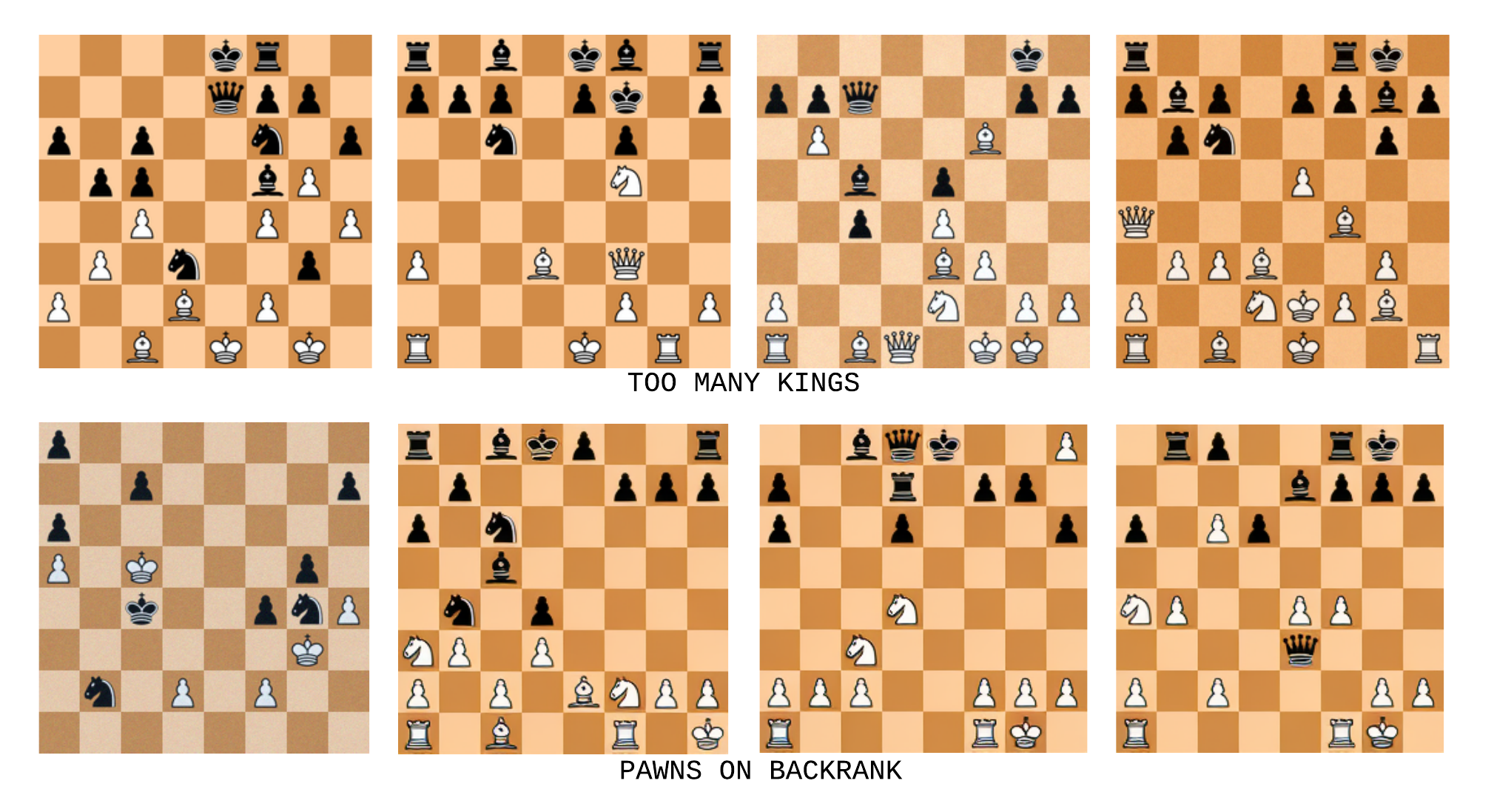}
  \caption{Generated images marked Hallucinated for the reasons mentioned at the bottom of each row.}
  \label{fig:Chess_fig2}
\end{figure}

Demonstrated in Tab. \ref{tab:novel_boards} we compare the total number of valid novel boards generated by all the methods. Proposed methods can be utilized as more robust data augmentation technics with high rule prior datasets such as proposed ChessImages dataset.

\begin{minipage}{0.6\linewidth}
  \centering
  \scalebox{0.95}{%
    \begin{tabular}{@{}l c@{}}
      \toprule
      \textbf{Method} & \textbf{\# Novel Boards} \\
      \midrule
      DDPM \cite{ho2020denoising}                    & 60842 \\
      % LDM–UC \cite{ldm} &  \\
      Variance Learning \cite{nichol2021improved} & 69950 \\
      $\mathcal{L}_{VSM}$ &  77421 \\
      \bottomrule
    \end{tabular}%
  }
\end{minipage}%
\hfill
\begin{minipage}{0.3 \linewidth}
\captionof{table}{Number of valid boards that are novel out of 190K generated samples. Both Variance learning and $\mathcal{L}_{VSM}$ gnerate considerably large number of valid novel boards as compared to baselines.}
\label{tab:novel_boards}
\end{minipage}

\section{Effect of Dataset Size}
We investigate how the size of the training set influences hallucination rates. To ensure consistent comparisons, we construct three nested subsets containing 75\%, 50\%, and 25\% of the full dataset—each smaller subset being wholly contained within the next larger one. As shown in Tab.~\ref{tab:dataset_size}, shrinking the training set reduces the support from diverse examples, which in turn increases the incidence of hallucinations. This underscores the crucial role of ample data support in achieving reliable image generation.

\noindent
\begin{minipage}[t]{0.60\linewidth}
  \centering
  \scalebox{0.95}{%
    \begin{tabular}{@{} c  c  c  c @{}}
      \toprule
      \textbf{Dataset Size} 
        & \multicolumn{2}{c}{\textbf{Shapes}} 
        & \textbf{ChessImages} \\
      \cmidrule(lr){2-3} \cmidrule(lr){4-4}
      & \textbf{DDPM} 
      & $\mathcal{L}_{VSM}$ 
      & $\mathcal{L}_{VSM}$ \\
      \midrule
      25  & 89.74 & 20.50 & 80.33 \\
      50  & 57.16 & 13.16 & 65.35 \\
      75  & 55.16 &  5.66 & 61.75 \\
      100 & 29.50 &  3.00 & 55.0 \\
      \bottomrule
    \end{tabular}%
  }
\end{minipage}\hfill
\begin{minipage}{0.28\linewidth}
  \captionof{table}{Effect of training-set size on hallucination rates (\%): for the Shapes dataset we compare DDPM vs.\ $\mathcal{L}_{VSM}$, and for ChessImages we report on $\mathcal{L}_{VSM}$.}
  \label{tab:dataset_size}
\end{minipage}

\section{Effect of number of Denoising Steps on Hallucinations}
We investigate how varying the number of denoising steps during inference affects hallucination rate on the Chess dataset. As~\cref{tab:rho_all} shows, there is no discernible relationship between step count and hallucination rate. Although fewer denoising steps are known to degrade overall image fidelity, they do not consistently alter the number of hallucinations.
\begin{table}[t]
\centering
\begin{tabular}{@{}ccc@{}}
\toprule
\multirow{2}{*}{Denoising Steps} & \multicolumn{2}{c}{Hallucinations (\%)} \\
\cmidrule(lr){2-2}\cmidrule(lr){3-3}
 & DDPM & $\mathcal{L}_{\mathrm{VSM}}$ \\
\midrule
50  & 61.75 & 57.75 \\
100 & 62.00 & 53.00 \\
150 & 69.50 & 55.75 \\
200 & 64.00 & 51.00 \\
250 & 66.25 & 55.00 \\
\bottomrule
\end{tabular}
\caption{Effect of denoising steps on hallucinations (\%) on the Chess dataset.}
\label{tab:rho_all}
\end{table}

\section{LDM Prompt Tuning ~\cite{mahajan2024prompting}} \label{sec:prompt-tuning}
For the conditional LDM (LDM-C) setting, we condition generation on text prompts: a single default prompt for the Hands dataset, and class-embedded prompts for MNIST. In the prompt-tuning (LDM-PT) setting, we further fine-tune these prompts to mitigate the hallucinations we observed (see Table 3). For each dataset, we crafted 20 distinct prompts and, at inference time, randomly select one to drive image synthesis. We observe that this prompt-tuning strategy substantially reduces hallucination rates on both Hands and MNIST.
% \newpage

\textbf{MNIST:}
Default Prompt:
\begin{lstlisting}
    ["Image of handwritten digit <digit_class>"]
\end{lstlisting}
% \newpage
Finetuned Prompts:
\begin{lstlisting}
    [ # I. Zero
    "MNIST-style handwritten 'zero': thin white strokes, centered on a clean black background, no extra marks.",
    "MNIST-style handwritten 'zero': minimal white loop, centered on black, uniform thickness, no noise.",

    # II. One
    "MNIST-style handwritten 'one': single thin white vertical stroke, centered on black, no stray pixels.",
    "MNIST-style handwritten 'one': clean white digit one, straight line, centered on black, isolated.",

    # III. Two
    "MNIST-style handwritten 'two': crisp white strokes, centered on black, no overlapping or smudges.",
    "MNIST-style handwritten 'two': clear white digit two, centered on black, uniform lines, no noise.",

    # IV. Three
    "MNIST-style handwritten 'three': two smooth thin white strokes, centered on black, no extra artifacts.",
    "MNIST-style handwritten 'three': neat white digit three, centered on black, distinct curves, clean.",

    # V. Four
    "MNIST-style handwritten 'four': intersecting thin white strokes, centered on black, no stray marks.",
    "MNIST-style handwritten 'four': crisp white digit four, centered on black, clear junctions.",

    # VI. Five
    "MNIST-style handwritten 'five': clear thin white strokes, centered on black, no overlapping lines.",
    "MNIST-style handwritten 'five': sharp white digit five, centered on black, isolated strokes.",

    # VII. Six
    "MNIST-style handwritten 'six': continuous thin white stroke, centered on black, no breaks.",
    "MNIST-style handwritten 'six': clean white digit six, rounded form, centered on black, no noise.",

    # VIII. Seven
    "MNIST-style handwritten 'seven': two thin white strokes, centered on black, no extra marks.",
    "MNIST-style handwritten 'seven': neat white digit seven, centered on black, uniform thickness.",

    # IX. Eight
    "MNIST-style handwritten 'eight': two distinct thin white loops, centered on black, no distortions.",
    "MNIST-style handwritten 'eight': symmetric white digit eight, centered on black, clear separation.",

    # X. Nine
    "MNIST-style handwritten 'nine': thin white strokes, centered on black, isolated and clean.",
    "MNIST-style handwritten 'nine': crisp white digit nine, centered on black, no extra pixels."]
\end{lstlisting}
% \newpage

\textbf{Hands:}

Default Prompt:
\begin{lstlisting}
    ["Close-up high quality image of a human hand on White background"]
\end{lstlisting}
Finetuned Prompts:
\begin{lstlisting}
    ["High-resolution photo of a human hand, palm fully open with five fingers (thumb, index, middle, ring, pinky) spread naturally, plain white background.",
    
    "Close-up shot of an open human palm showing all five fingers in correct thumb-to-pinky order, flat facing the camera, on white.",
    
    "Photograph of a human hand with palm wide open, five straight fingers (thumb - index - middle - ring - little finger), against a white backdrop.",
    
    "Studio image of an open palm displaying five fingers in proper sequence-thumb at left, pinky at right-on a clean white background.",
    
    "Realistic photo of a single human palm, five fingers fully extended in thumb-to-pinky order, flat and facing forward, white background.",
    
    "High-quality image of a human hand, palm completely open, five fingers aligned anatomically (thumb, index, middle, ring, pinky), white backdrop.",
    
    "Close-up of an open palm with five straight fingers, thumb on the left and pinky on the right, on solid white.",
    
    "Photorealistic shot of a fully opened palm showing five fingers in correct order, flat against a white background.",
    
    "Sharp photo of a human hand, palm fully extended with thumb, index, middle, ring, and little finger visible in order, white background.",
    
    "Clean studio portrait of an open palm-five fingers (thumb through pinky) splayed evenly-on a white backdrop.",
    
    "High-resolution image of an open palm with five anatomically ordered fingers, thumb first then index, middle, ring, and pinky, against white.",
    
    "Close-up studio photo of a human palm fully open, showing five straight fingers in thumb-to-pinky sequence, white background.",
    
    "Real-life shot of an open hand with palm facing camera, five fingers (thumb - index - middle - ring - little) in order, white backdrop.",
    
    "Crisp image of an open palm with five fingers aligned anatomically, thumb on the left edge, pinky on the right, plain white background.",
    
    "Photograph of a human palm flat and facing forward, five fingers visible in correct anatomical order, white background.",
    
    "Studio-style image of an open hand-five fingers from thumb to pinky-fully extended and flat against white.",
    
    "Close-up of a human palm with five distinct fingers, starting from thumb then index, middle, ring, little, on a white backdrop.",
    
    "Detailed photo of an open palm showing five fingers in sequence, thumb at outer edge, pinky at other, on solid white.",
    
    "High-detail shot of a human palm fully opened, five straight fingers in anatomical order, flat and white background.",
    
    "Clear photo of a human hand, palm fully open with thumb, index, middle, ring, and pinky fingers visible in order on a white background."]
\end{lstlisting}

\section{More details on the Cards dataset}
In \cref{fig:cards_fig} we show more samples of the images that are hallucinated by the rules mentioned in the main paper. 

% (i) The number of symbols does not match the card value. 

% (ii) The color of the symbol is incorrect.

% (iii) Invalid or Missing symbols

% (iv) Multiple conflicting symbols in a single card.

\begin{figure}[htbp]
  \centering
  \includegraphics[width=0.95\linewidth]{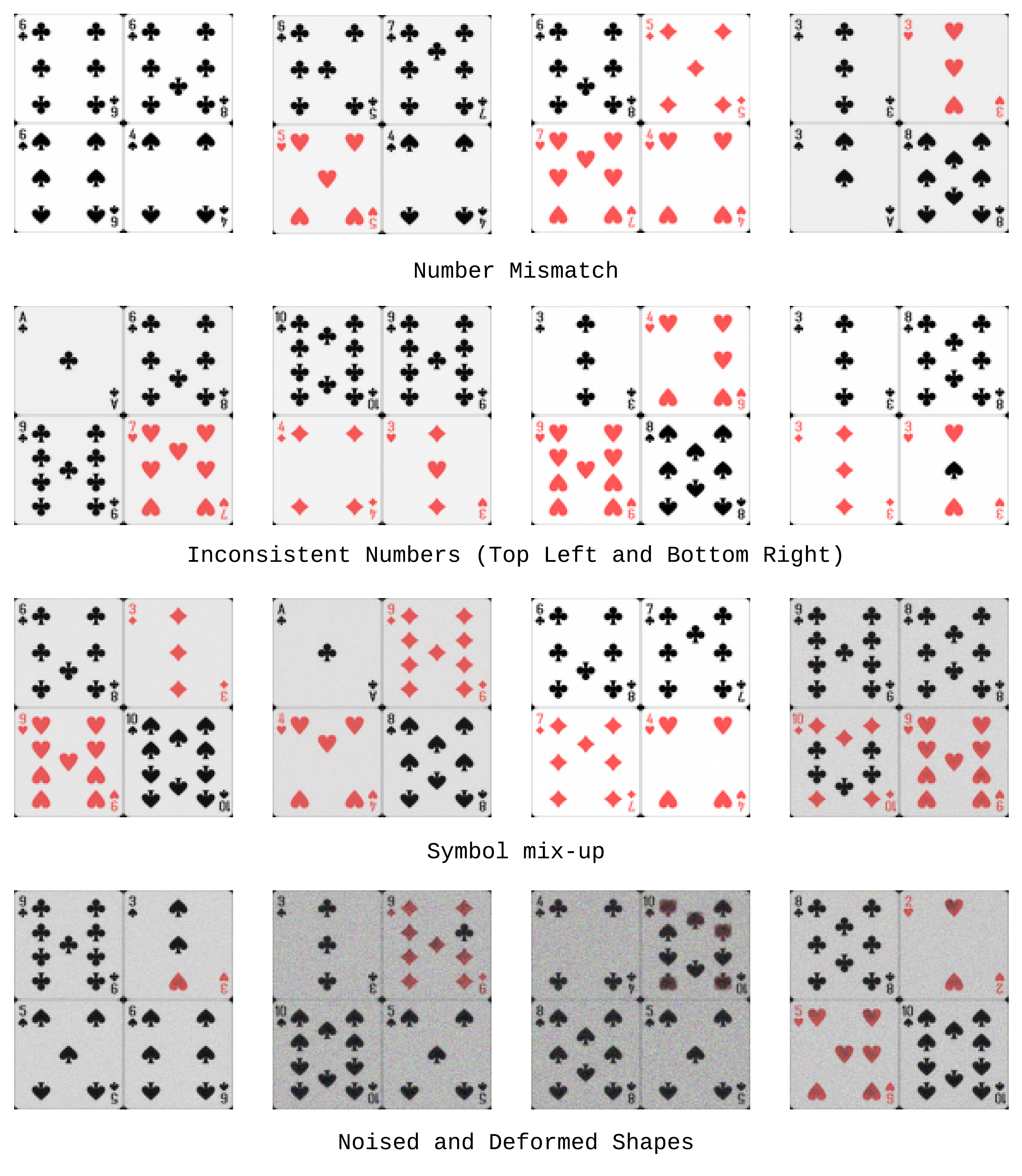}
  \caption{Generated images marked Hallucinated for the reasons mentioned at the bottom of each row. }
  \label{fig:cards_fig}
\end{figure}

\section{Implementation Details}
For 1D and 2D datasets, our code is built upon \cite{mode_interpolation}. For Image datasets with variance learning and $\mathcal{L}_{VSM}$ implementation, we build upon \cite{nichol2021improved}. All experiments are carried out on 8 Nvidia A6000 GPUs. All the quantitative results on the Image datasets are obtained using six seeds and generating 100 images per seed. We also used six seeds for the 1D and 2D cases, generated 1 million sample points per seed, and reported the average. For the LDM baseline, we use the codebase provided by \cite{ldm}. Specifically, for LDM-C, we initialized our diffusion model from the Stable Diffusion checkpoint pretrained on ImageNet and used the CLIP text encoder to extract text embeddings. For unconditional training, we train LDM from scratch. For \cite{adapt_attention}, we directly use the quantitative results reported in the original paper. 

\begin{figure}[t]
  \centering
    \includegraphics[width=\linewidth]{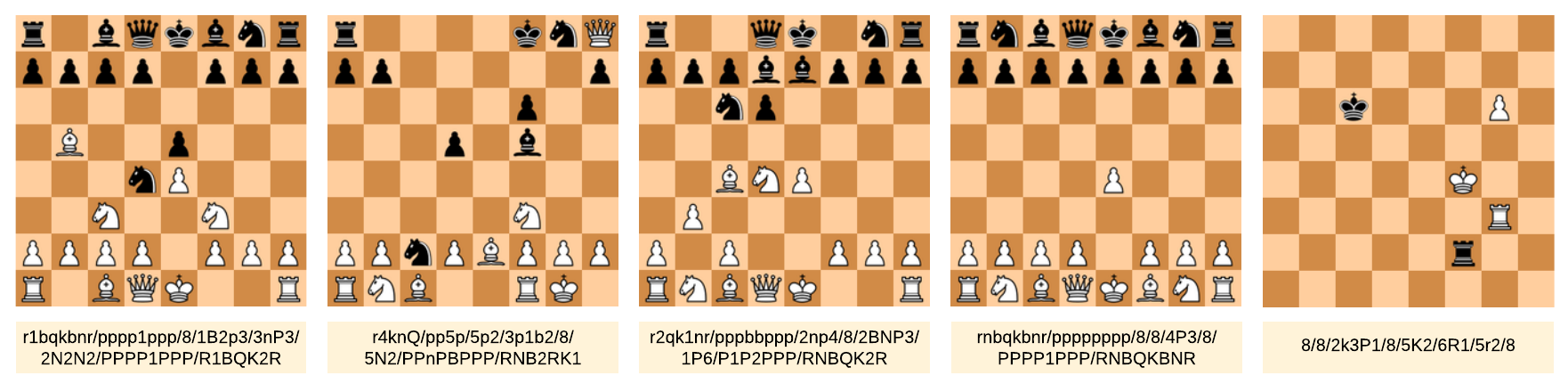}
  \caption{Example samples from the proposed \textit{ChessImages} dataset. 
\textit{Top}: a generated chessboard configuration. 
\textit{Bottom}: its corresponding Forsyth–Edwards Notation (FEN) string, 
providing an exact symbolic representation of the board state.}
  \label{fig:example_chess}
\end{figure}

\section{Additonal qualitative samples on ImageNet-1K}

We provide additional qualitative comparisons on the ImageNet-1K dataset in Figure \cref{fig:cats-dogs}. We use the LDM model trained without $\mathcal{L}_{VSM}$ as the baseline (shown in red) and compare it against our method trained with $\mathcal{L}_{VSM}$ (shown in green). The baseline frequently produces deformed objects and incompletely denoised samples, resulting in images that deviate from the training data distribution. In contrast, our method mitigates these failure cases and generates samples that are more coherent, well-formed, and closely aligned with the training data distribution. Quantitative results are provided in Table 3 of the main paper.

\begin{figure}[h]
  \centering
    \includegraphics[width=\linewidth]{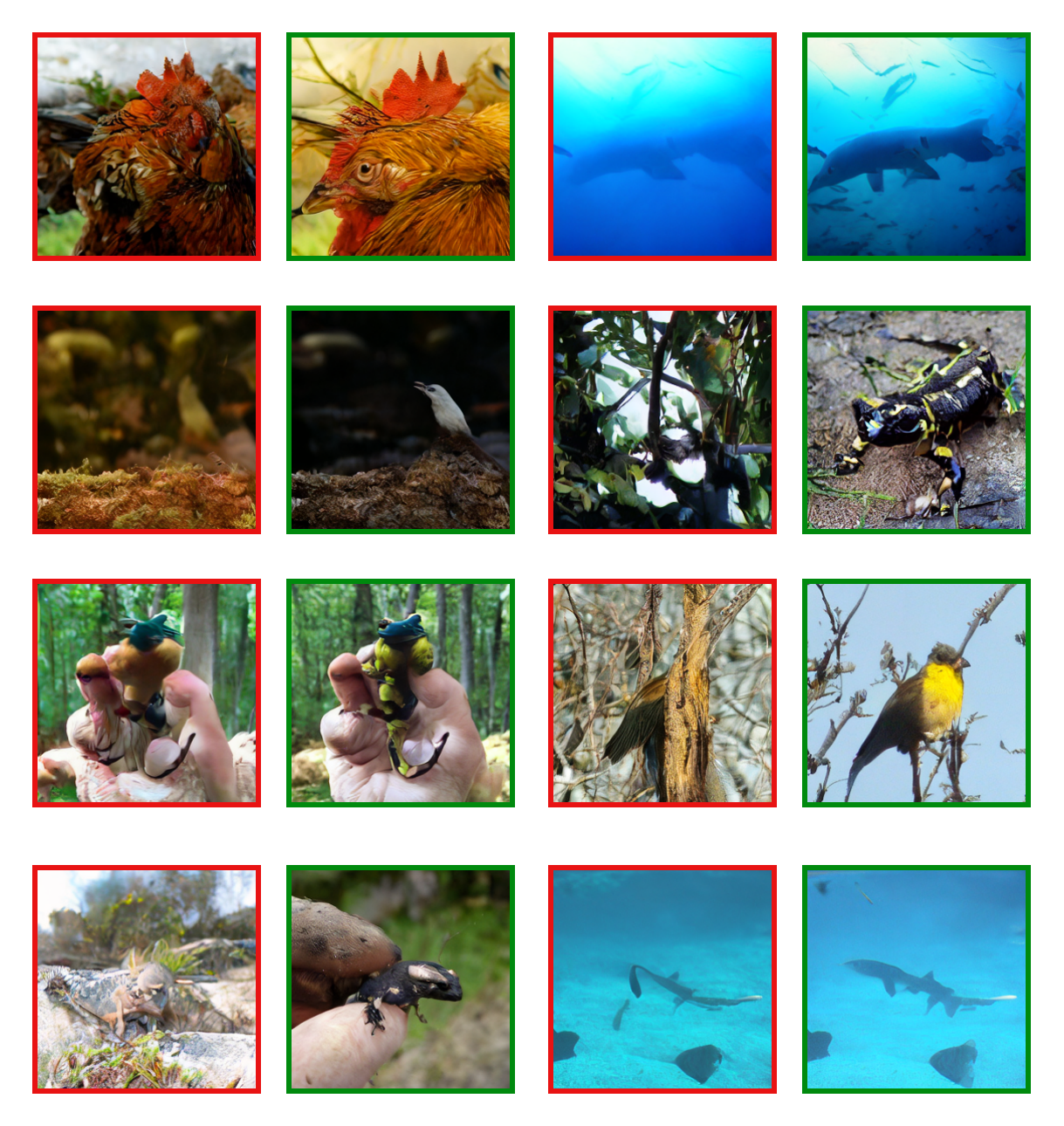}
  \caption{We observe that our method corrects the deformed objects, incompletely denoised images on the ImageNet-1K dataset.}
  \label{fig:cats-dogs}
\end{figure}

\bibliographystyle{ACM-Reference-Format}
\bibliography{sample-base}

@String{Computing = "Computing" }

@String{Computer = "{IEEE} Computer" }

@String{Academic = "Academic Press" }

@misc{umh,
      title={Verifying the Union of Manifolds Hypothesis for Image Data}, 
      author={Bradley C. A. Brown and Anthony L. Caterini and Brendan Leigh Ross and Jesse C. Cresswell and Gabriel Loaiza-Ganem},
      year={2023},
      eprint={2207.02862},
      archivePrefix={arXiv},
      primaryClass={stat.ML},
      url={https://arxiv.org/abs/2207.02862}, 
}

@article{ho2020denoising,
  title={Denoising diffusion probabilistic models},
  author={Ho, Jonathan and Jain, Ajay and Abbeel, Pieter},
  journal={Advances in neural information processing systems},
  volume={33},
  pages={6840--6851},
  year={2020}
}

@article{guo2024diffusion,
  title={Diffusion models in bioinformatics and computational biology},
  author={Guo, Zhiye and Liu, Jian and Wang, Yanli and Chen, Mengrui and Wang, Duolin and Xu, Dong and Cheng, Jianlin},
  journal={Nature reviews bioengineering},
  volume={2},
  number={2},
  pages={136--154},
  year={2024},
  publisher={Nature Publishing Group UK London}
}

@online{stability_sd35_2024,
  author       = {{Stability AI}},
  title        = {Introducing Stable Diffusion 3.5},
  year         = {2024},
  month        = {10},
  url          = {https://stability.ai/news/introducing-stable-diffusion-3-5},
  note         = {Official announcement of the SD~3.5 model family}
}

@article{hao2023safety,
  title={Safety and fairness for content moderation in generative models},
  author={Hao, Susan and Kumar, Piyush and Laszlo, Sarah and Poddar, Shivani and Radharapu, Bhaktipriya and Shelby, Renee},
  journal={arXiv preprint arXiv:2306.06135},
  year={2023}
}

@article{wewer2025spatial,
  title={Spatial reasoning with denoising models},
  author={Wewer, Christopher and Pogodzinski, Bart and Schiele, Bernt and Lenssen, Jan Eric},
  journal={arXiv preprint arXiv:2502.21075},
  year={2025}
}

@inproceedings{lu2025towards,
  title={Towards understanding text hallucination of diffusion models via local generation bias},
  author={Lu, Rui and Wang, Runzhe and Lyu, Kaifeng and Jiang, Xitai and Huang, Gao and Wang, Mengdi},
  booktitle={The Thirteenth International Conference on Learning Representations},
  year={2025}
}

@misc{kim2024tacklingstructuralhallucinationimage,
      title={Tackling Structural Hallucination in Image Translation with Local Diffusion}, 
      author={Seunghoi Kim and Chen Jin and Tom Diethe and Matteo Figini and Henry F. J. Tregidgo and Asher Mullokandov and Philip Teare and Daniel C. Alexander},
      year={2024},
      eprint={2404.05980},
      archivePrefix={arXiv},
      primaryClass={cs.CV},
      url={https://arxiv.org/abs/2404.05980}, 
}

@misc{trust_gen,
      title={On the Trustworthiness of Generative Foundation Models: Guideline, Assessment, and Perspective}, 
      author={Yue Huang and Chujie Gao and Siyuan Wu and Haoran Wang and Xiangqi Wang and Yujun Zhou and Yanbo Wang and Jiayi Ye and Jiawen Shi and Qihui Zhang and Yuan Li and Han Bao and Zhaoyi Liu and Tianrui Guan and Dongping Chen and Ruoxi Chen and Kehan Guo and Andy Zou and Bryan Hooi Kuen-Yew and Caiming Xiong and Elias Stengel-Eskin and Hongyang Zhang and Hongzhi Yin and Huan Zhang and Huaxiu Yao and Jaehong Yoon and Jieyu Zhang and Kai Shu and Kaijie Zhu and Ranjay Krishna and Swabha Swayamdipta and Taiwei Shi and Weijia Shi and Xiang Li and Yiwei Li and Yuexing Hao and Zhihao Jia and Zhize Li and Xiuying Chen and Zhengzhong Tu and Xiyang Hu and Tianyi Zhou and Jieyu Zhao and Lichao Sun and Furong Huang and Or Cohen Sasson and Prasanna Sattigeri and Anka Reuel and Max Lamparth and Yue Zhao and Nouha Dziri and Yu Su and Huan Sun and Heng Ji and Chaowei Xiao and Mohit Bansal and Nitesh V. Chawla and Jian Pei and Jianfeng Gao and Michael Backes and Philip S. Yu and Neil Zhenqiang Gong and Pin-Yu Chen and Bo Li and Dawn Song and Xiangliang Zhang},
      year={2025},
      eprint={2502.14296},
      archivePrefix={arXiv},
      primaryClass={cs.CY},
      url={https://arxiv.org/abs/2502.14296}, 
}

@online{stanford_ai_index_2025,
  author       = {{Stanford HAI}},
  title        = {The 2025 AI Index Report},
  year         = {2025},
  url          = {https://hai.stanford.edu/ai-index/2025-ai-index-report},
  note         = {Reports 78\% of organizations using AI in 2024}
}

@online{adobe_firefly_22b_2025,
  author       = {{Adobe}},
  title        = {Adobe Firefly: The next evolution of creative AI is here},
  year         = {2025},
  month        = {4},
  url          = {https://blog.adobe.com/en/publish/2025/04/24/adobe-firefly-next-evolution-creative-ai-is-here},
  note         = {Adobe reports 22B+ Firefly-generated assets worldwide}
}

@inproceedings{mode_interpolation,
 author = {Aithal, Sumukh K and Maini, Pratyush and Lipton, Zachary C. and Kolter, J. Zico},
 booktitle = {Advances in Neural Information Processing Systems},
 editor = {A. Globerson and L. Mackey and D. Belgrave and A. Fan and U. Paquet and J. Tomczak and C. Zhang},
 pages = {134614--134644},
 publisher = {Curran Associates, Inc.},
 title = {Understanding Hallucinations in Diffusion Models through Mode Interpolation},
 url = {https://proceedings.neurips.cc/paper_files/paper/2024/file/f29369d192b13184b65c6d2515474d78-Paper-Conference.pdf},
 volume = {37},
 year = {2024}
}

@article{song2019generative,
  title={Generative modeling by estimating gradients of the data distribution},
  author={Song, Yang and Ermon, Stefano},
  journal={Advances in neural information processing systems},
  volume={32},
  year={2019}
}

@article{interpolation,
  title={On the Interpolation Effect of Score Smoothing},
  author={Chen, Zhengdao},
  journal={arXiv preprint arXiv:2502.19499},
  year={2025}
}

@article{adapt_attention,
  title={Mitigating Hallucinations in Diffusion Models through Adaptive Attention Modulation},
  author={Oorloff, Trevine and Yacoob, Yaser and Shrivastava, Abhinav},
  journal={arXiv preprint arXiv:2502.16872},
  year={2025}
}

@inproceedings{nichol2021improved,
  title={Improved denoising diffusion probabilistic models},
  author={Nichol, Alexander Quinn and Dhariwal, Prafulla},
  booktitle={International conference on machine learning},
  pages={8162--8171},
  year={2021},
  organization={PMLR}
}

@article{ddim,
  title={Denoising diffusion implicit models},
  author={Song, Jiaming and Meng, Chenlin and Ermon, Stefano},
  journal={arXiv preprint arXiv:2010.02502},
  year={2020}
}

@inproceedings{ldm,
  title={High-resolution image synthesis with latent diffusion models},
  author={Rombach, Robin and Blattmann, Andreas and Lorenz, Dominik and Esser, Patrick and Ommer, Bj{\"o}rn},
  booktitle={Proceedings of the IEEE/CVF conference on computer vision and pattern recognition},
  pages={10684--10695},
  year={2022}
}

@misc{russakovsky2015imagenetlargescalevisual,
      title={ImageNet Large Scale Visual Recognition Challenge}, 
      author={Olga Russakovsky and Jia Deng and Hao Su and Jonathan Krause and Sanjeev Satheesh and Sean Ma and Zhiheng Huang and Andrej Karpathy and Aditya Khosla and Michael Bernstein and Alexander C. Berg and Li Fei-Fei},
      year={2015},
      eprint={1409.0575},
      archivePrefix={arXiv},
      primaryClass={cs.CV},
      url={https://arxiv.org/abs/1409.0575}, 
}

@misc{improvedprecision,
      title={Improved Precision and Recall Metric for Assessing Generative Models}, 
      author={Tuomas Kynkäänniemi and Tero Karras and Samuli Laine and Jaakko Lehtinen and Timo Aila},
      year={2019},
      eprint={1904.06991},
      archivePrefix={arXiv},
      primaryClass={stat.ML},
      url={https://arxiv.org/abs/1904.06991}, 
}

@inproceedings{mahajan2024prompting,
  title={Prompting hard or hardly prompting: Prompt inversion for text-to-image diffusion models},
  author={Mahajan, Shweta and Rahman, Tanzila and Yi, Kwang Moo and Sigal, Leonid},
  booktitle={Proceedings of the IEEE/CVF Conference on Computer Vision and Pattern Recognition},
  pages={6808--6817},
  year={2024}
}

@article{song2020score,
  title={Score-based generative modeling through stochastic differential equations},
  author={Song, Yang and Sohl-Dickstein, Jascha and Kingma, Diederik P and Kumar, Abhishek and Ermon, Stefano and Poole, Ben},
  journal={arXiv preprint arXiv:2011.13456},
  year={2020}
}

@misc{triaridis2025mitigatingdiffusionmodelhallucinations,
      title={Mitigating Diffusion Model Hallucinations with Dynamic Guidance}, 
      author={Kostas Triaridis and Alexandros Graikos and Aggelina Chatziagapi and Grigorios G. Chrysos and Dimitris Samaras},
      year={2025},
      eprint={2510.05356},
      archivePrefix={arXiv},
      primaryClass={cs.CV},
      url={https://arxiv.org/abs/2510.05356}, 
}

@article{saha2023valued,
  title={VALUED--Vision and Logical Understanding Evaluation Dataset},
  author={Saha, Soumadeep and Saha, Saptarshi and Garain, Utpal},
  journal={arXiv preprint arXiv:2311.12610},
  year={2023}
}

@article{wu2023ar,
  title={Ar-diffusion: Auto-regressive diffusion model for text generation},
  author={Wu, Tong and Fan, Zhihao and Liu, Xiao and Zheng, Hai-Tao and Gong, Yeyun and Jiao, Jian and Li, Juntao and Guo, Jian and Duan, Nan and Chen, Weizhu and others},
  journal={Advances in Neural Information Processing Systems},
  volume={36},
  pages={39957--39974},
  year={2023}
}

@article{shen2023finetuning,
  title={Finetuning text-to-image diffusion models for fairness},
  author={Shen, Xudong and Du, Chao and Pang, Tianyu and Lin, Min and Wong, Yongkang and Kankanhalli, Mohan},
  journal={arXiv preprint arXiv:2311.07604},
  year={2023}
}

@InProceedings{bhosale2025pathdiff,
    author    = {Bhosale, Mahesh and Wasi, Abdul and Zhai, Yuanhao and Tian, Yunjie and Border, Samuel and Xi, Nan and Sarder, Pinaki and Yuan, Junsong and Doermann, David and Gong, Xuan},
    title     = {PathDiff: Histopathology Image Synthesis with Unpaired Text and Mask Conditions},
    booktitle = {Proceedings of the IEEE/CVF International Conference on Computer Vision (ICCV)},
    month     = {October},
    year      = {2025},
    pages     = {22415-22424}
}

@inproceedings{saharia2022palette,
  title={Palette: Image-to-image diffusion models},
  author={Saharia, Chitwan and Chan, William and Chang, Huiwen and Lee, Chris and Ho, Jonathan and Salimans, Tim and Fleet, David and Norouzi, Mohammad},
  booktitle={ACM SIGGRAPH 2022 conference proceedings},
  pages={1--10},
  year={2022}
}

@article{li2022diffusion,
  title={Diffusion-lm improves controllable text generation},
  author={Li, Xiang and Thickstun, John and Gulrajani, Ishaan and Liang, Percy S and Hashimoto, Tatsunori B},
  journal={Advances in neural information processing systems},
  volume={35},
  pages={4328--4343},
  year={2022}
}

@article{hands11k,
author = {Afifi, Mahmoud},
title = {11K Hands: Gender recognition and biometric identification using a large dataset of hand images},
year = {2019},
issue_date = {Aug 2019},
publisher = {Kluwer Academic Publishers},
address = {USA},
volume = {78},
number = {15},
issn = {1380-7501},
url = {https://doi.org/10.1007/s11042-019-7424-8},
doi = {10.1007/s11042-019-7424-8},
journal = {Multimedia Tools Appl.},
month = aug,
pages = {20835–20854},
numpages = {20}
}

@inproceedings{kushwaha2025diff,
  title={Diff-SAGe: End-to-End Spatial Audio Generation Using Diffusion Models},
  author={Kushwaha, Saksham Singh and Ma, Jianbo and Thomas, Mark RP and Tian, Yapeng and Bruni, Avery},
  booktitle={ICASSP 2025-2025 IEEE International Conference on Acoustics, Speech and Signal Processing (ICASSP)},
  pages={1--5},
  year={2025},
  organization={IEEE}
}

@ARTICLE{mnist,
  author={Lecun, Y. and Bottou, L. and Bengio, Y. and Haffner, P.},
  journal={Proceedings of the IEEE}, 
  title={Gradient-based learning applied to document recognition}, 
  year={1998},
  volume={86},
  number={11},
  pages={2278-2324},
  doi={10.1109/5.726791}}

@article{fld,
  title={Feature likelihood divergence: evaluating the generalization of generative models using samples},
  author={Jiralerspong, Marco and Bose, Joey and Gemp, Ian and Qin, Chongli and Bachrach, Yoram and Gidel, Gauthier},
  journal={Advances in Neural Information Processing Systems},
  volume={36},
  pages={33095--33119},
  year={2023}
}

@inproceedings{alaa2022faithful,
  title={How faithful is your synthetic data? sample-level metrics for evaluating and auditing generative models},
  author={Alaa, Ahmed and Van Breugel, Boris and Saveliev, Evgeny S and Van Der Schaar, Mihaela},
  booktitle={International conference on machine learning},
  pages={290--306},
  year={2022},
  organization={PMLR}
}

@article{pham2025memorization,
  title   = {Memorization to Generalization: Emergence of Diffusion Models from Associative Memory},
  author  = {Pham, Bao and Raya, Gabriel and Negri, Matteo and Zaki, Mohammed J. and Ambrogioni, Luca and Krotov, Dmitry},
  journal = {arXiv preprint arXiv:2505.21777},
  year    = {2025}
}

@article{ho2020ddpm,
  title={Denoising Diffusion Probabilistic Models},
  author={Ho, Jonathan and Jain, Ajay and Abbeel, Pieter},
  journal={NeurIPS},
  year={2020},
  url={https://arxiv.org/abs/2006.11239}
}

@article{song2020sde,
  title={Score-Based Generative Modeling through Stochastic Differential Equations},
  author={Song, Yang and Sohl-Dickstein, Jascha and Kingma, Diederik P. and Kumar, Abhishek and Ermon, Stefano and Poole, Ben},
  journal={ICLR},
  year={2021},
  url={https://arxiv.org/abs/2011.13456}
}

@inproceedings{MengSongLiErmon2021,
  title        = {Estimating High Order Gradients of the Data Distribution by Denoising},
  author       = {Chenlin Meng and Yang Song and Wenzhe Li and Stefano Ermon},
  booktitle    = {NeurIPS},
  year         = {2021},
  pages        = {12477--12488},
}

@article{alger2024point,
  title={Point spread function approximation of high-rank Hessians with locally supported nonnegative integral kernels},
  author={Alger, Nick and Hartland, Tucker and Petra, Noemi and Ghattas, Omar},
  journal={SIAM Journal on Scientific Computing},
  volume={46},
  number={3},
  pages={A1658--A1689},
  year={2024},
  publisher={SIAM}
}

@book{folland1999real,
  title={Real Analysis: Modern Techniques and Their Applications},
  author={Folland, Gerald B.},
  year={1999},
  publisher={John Wiley \& Sons},
  edition={2nd},
  address={New York}
}

@misc{yang2024lipschitzsingularitiesdiffusionmodels,
      title={Lipschitz Singularities in Diffusion Models}, 
      author={Zhantao Yang and Ruili Feng and Han Zhang and Yujun Shen and Kai Zhu and Lianghua Huang and Yifei Zhang and Yu Liu and Deli Zhao and Jingren Zhou and Fan Cheng},
      year={2024},
      eprint={2306.11251},
      archivePrefix={arXiv},
      primaryClass={cs.CV},
      url={https://arxiv.org/abs/2306.11251}, 
}

@book{rudin1976principles,
  title={Principles of Mathematical Analysis},
  author={Rudin, Walter},
  edition={3rd},
  year={1976},
  publisher={McGraw-Hill},
  address={New York},
  note={Extreme Value Theorem: a continuous function on a compact set attains a minimum and maximum}
}

@misc{python-chess,
  title        = {python-chess: a chess library for Python},
  howpublished = {\url{https://python-chess.readthedocs.io/en/latest/}},
  note         = {Accessed: YYYY-MM-DD}
}

@misc{devulapally2025textencoderobjectlevelwatermarking,
      title={Your Text Encoder Can Be An Object-Level Watermarking Controller}, 
      author={Naresh Kumar Devulapally and Mingzhen Huang and Vishal Asnani and Shruti Agarwal and Siwei Lyu and Vishnu Suresh Lokhande},
      year={2025},
      eprint={2503.11945},
      archivePrefix={arXiv},
      primaryClass={cs.CV},
      url={https://arxiv.org/abs/2503.11945}, 
}

%%
%% If your work has an appendix, this is the place to put it.
% \appendix
% \input{Sections/supplement}
% \section{Research Methods}

% \subsection{Part One}

% Lorem ipsum dolor sit amet, consectetur adipiscing elit. Morbi
% malesuada, quam in pulvinar varius, metus nunc fermentum urna, id
% sollicitudin purus odio sit amet enim. Aliquam ullamcorper eu ipsum
% vel mollis. Curabitur quis dictum nisl. Phasellus vel semper risus, et
% lacinia dolor. Integer ultricies commodo sem nec semper.

% \subsection{Part Two}

% Etiam commodo feugiat nisl pulvinar pellentesque. Etiam auctor sodales
% ligula, non varius nibh pulvinar semper. Suspendisse nec lectus non
% ipsum convallis congue hendrerit vitae sapien. Donec at laoreet
% eros. Vivamus non purus placerat, scelerisque diam eu, cursus
% ante. Etiam aliquam tortor auctor efficitur mattis.

% \section{Online Resources}

% Nam id fermentum dui. Suspendisse sagittis tortor a nulla mollis, in
% pulvinar ex pretium. Sed interdum orci quis metus euismod, et sagittis
% enim maximus. Vestibulum gravida massa ut felis suscipit
% congue. Quisque mattis elit a risus ultrices commodo venenatis eget
% dui. Etiam sagittis eleifend elementum.

% Nam interdum magna at lectus dignissim, ac dignissim lorem
% rhoncus. Maecenas eu arcu ac neque placerat aliquam. Nunc pulvinar
% massa et mattis lacinia.

\end{document}